\definecolor{codegreen}{rgb}{0,0.6,0}
\definecolor{codegray}{rgb}{0.5,0.5,0.5}
\definecolor{codepurple}{rgb}{0.58,0,0.82}
\definecolor{backcolour}{rgb}{0.95,0.95,0.92}
\lstdefinestyle{mystyle}{
    backgroundcolor=\color{backcolour},   
    commentstyle=\color{codegreen},
    keywordstyle=\color{magenta},
    numberstyle=\tiny\color{codegray},
    stringstyle=\color{codepurple},
    basicstyle=\ttfamily\footnotesize,
    breakatwhitespace=false,         
    breaklines=true,                 
    captionpos=b,                    
    keepspaces=true,                 
    numbers=left,                    
    numbersep=5pt,                  
    showspaces=false,                
    showstringspaces=false,
    showtabs=false,                  
    tabsize=2
}
\crefname{assumption}{Assumption}{Assumptions}
\crefname{equation}{Eq.}{Eqs.}
\crefname{figure}{Figure}{Figures}
\crefname{table}{Table}{Tables}
\crefname{section}{Section}{Sections}
\crefname{algorithm}{Algorithm}{Algorithms}
\crefname{theorem}{Theorem}{Theorems}
\crefname{lemma}{Lemma}{Lemmas}
\crefname{proposition}{Proposition}{Propositions}
\crefname{corollary}{Corollary}{Corollaries}
\crefname{example}{Example}{Examples}
\crefname{appendix}{Appendix}{Appendixes}
\crefname{remark}{Remark}{Remark}
\newcounter{remark}[section]
\newcommand{\calX}{\mathcal{X}}
\newcommand{\calY}{\mathcal{Y}}
\newcommand{\calZ}{\mathcal{Z}}
\newcommand{\calE}{\mathcal{E}}
\newcommand{\calH}{\mathcal{H}}
\newcommand{\calG}{\mathcal{G}}
\newcommand{\calC}{\mathcal{C}}
\newcommand{\xfun}{{\bf x}}
\newcommand{\xfunhat}{\widehat {\bf x}}
\newcommand{\Ghat}{\widehat G}
\newcommand{\Gstar}{G^\star}
\newcommand{\xp}{\text{x}}
\newcommand{\yp}{\text{y}}
\newcommand{\zp}{\text{z}}
\newcommand{\gfun}{g}
\DeclareMathAlphabet{\mathsfsl}{OT1}{cmss}{m}{sl}
\renewcommand{\phi}{\varphi}
\newcommand{\Rspace}[1]{\mathbb{R}^{#1}}
\newcommand{\R}{\mathbb{R}}
\newcommand{\N}{\mathbb{N}}
\newcommand{\gstar}{g^\star}
\newcommand{\ghat}{\widehat{g}}
\newcommand*{\defeq}{\mathrel{\vcenter{\baselineskip0.5ex \lineskiplimit0pt
                     \hbox{\scriptsize.}\hbox{\scriptsize.}}}%
                     =}
\newcommand{\eqal}[1]{
\begin{align}
#1
\end{align}
}
\newcommand{\argmin}{\operatorname*{arg\; min}}
\newcommand{\argmax}{\operatorname*{arg\; max}}
\newcommand{\Expect}{\operatorname{\mathbb{E}}}
\newcommand{\Lp}{\ell}
\theoremstyle{plain}  
\newtheorem{theorem}{Theorem}[section]
\newtheorem{lemma}[theorem]{Lemma}
\newtheorem{proposition}[theorem]{Proposition}
\newcommand{\smui}{\sum_{i=1}^n \mu_i}
\title{Structured and Localized Image Restoration}
\author{%
  Thomas Eboli$^\ddagger$\thanks{Equal contribution} \\
  \texttt{thomas.eboli@inria.fr} \\
  \And
  Alex Nowak-Vila$^{\ddagger *}$ \\
  \texttt{alex.nowak-vila@inria.fr} \\
  \And
  Jian Sun\thanks{Xi'an Jiaotong University}\\
  \texttt{jiansun@xjtu.edu.cn} \\
  \And
  Francis Bach\thanks{INRIA, D\'epartement d’informatique de l’ENS, ENS, CNRS, PSL University}\\
  \texttt{francis.bach@inria.fr} \\
  \And
  Jean Ponce$^\ddagger$\\
  \texttt{jean.ponce@inria.fr} \\
  \And
  Alessandro Rudi$^\ddagger$\\
  \texttt{alessandro.rudi@inria.fr} \\
}
\begin{document}

\maketitle

\begin{abstract}
    We present a novel approach to image restoration that leverages ideas from localized structured prediction and non-linear multi-task learning.
    We optimize a penalized energy function regularized by a sum of terms measuring the distance between patches to be restored and clean patches from an external database gathered beforehand.
    The resulting estimator comes with strong statistical guarantees leveraging local dependency properties of overlapping patches.
    We derive the corresponding algorithms for energies based on the mean-squared and Euclidean norm errors. 
    Finally, we demonstrate the practical effectiveness of our model on different image restoration problems using standard benchmarks.
\end{abstract}

\section{Introduction}

After decades of work, image processing is still a vibrant research field, particularly well suited to modern machine learning technology, given that it is often easy to generate clean/corrupted image pairs, and more pertinent than ever with the ubiquitous use of smartphone cameras, and many applications in personal photography \cite{abdelhamed18sidd}, microscopy \cite{weigert18microscopy} and astronomy \cite{starck06astronomical}, for example.
Given a clean image $x$, a known (in general) linear operator $B$ such as downsampling or blur, and additive noise $\varepsilon$ with standard deviation $\sigma$, the degraded image $y$ is typically modelled as
\begin{equation}\label{eq:formationmodel}
    y = Bx + \varepsilon.
\end{equation}
The operator $B$ is problem specific, e.g., for denoising it is the identity, and for deblurring it is a sparse matrix standing for a convolution with some blur kernel\cite{mairal14sparse}.
Solvers for this problem date back from Wiener's seminal work \cite{wiener49extrapolation}, that cast image restoration as the minimization of some mean-squared error (MSE).
Decades later, Rudin {\em et al.} \cite{rudin92totalvariation} proposed 
a variational formulation of image denoising using both a data term and a total variation prior.
A discretized version of this approach and its optimization were proposed by Wang {\em et al.} \cite{wang08alternating} and it has since been extended to various tasks such as non-blind deblurring using much powerful priors \cite{krishnan09hyperlaplacian, zoran11learning}.

In this work, we propose a non-parametric general image restoration algorithm based on localized structured prediction \cite{ciliberto19localized, ciliberto20general} and non-linear multi-task learning \cite{Ciliberto17multitask}.
In particular, we adapt to patch-based image restoration the theoretical framework of \cite{ciliberto19localized} for energy minimization under local task-specific constraints.
Our approach is reminiscent of that of Zoran and Weiss \cite{zoran11learning} and its patch-based energy function, but it also combines ideas from example-based \cite{freeman02example, he14inpainting}, variational \cite{krishnan09hyperlaplacian, zoran11learning} and data-driven methods \cite{takeda07kernel, chen17trainable} into a single model. A crucial and distinctive feature of our approach is that we have a complete statistical analysis of the corresponding prediction error. In particular, we provide an interpretable error upper bound explicitly parameterized by quantities depending on local dependency properties of the natural images. 
Most image restoration works do not provide a theoretical analysis, with notable exceptions such as \cite{mairal10online} in the context of online dictionary learning.
Our estimator is based on a convex energy function that can be minimized exactly, contrary to standard example-based methods that often rely on approximation of Markov random fields \cite{freeman02example} or local and non-local averaging techniques \cite{buades05nonlocal, dabov07image}.
Data-driven methods such as sparse coding \cite{elad06image, zeyde10scaleup} often assume an unrealistic i.i.d. patch model, whereas our framework explicitly accounts for the patch correlations.
Although CNN-based methods achieve state-of-the-art results in many restoration tasks \cite{zhang17beyond, dong16srcnn, zhang17fullyconvolutional},
they often ignore the underlying image degradation model, require a very large number of training data and suffer from a lack of interpretability and theoretical guarantees.
Instead, the proposed approach only requires a limited number of examples since it leverages the interplay of the the forward model with the local properties of the data at the patch level, and it is fully interpretable.
Finally, our approach is highly modular, and many of its components can be changed and/or learnt while ensuring convergence, similar to Ciliberto {\em et al.} \cite{ciliberto19localized, ciliberto20general} in the context of statistical learning theory.
Changing these components shapes the form of the energy and can lead to specific solvers for each situation. 
Concretely, our contributions include:
\begin{itemize}[leftmargin=1cm]
    \item A new image restoration framework based on localized structured prediction \cite{ciliberto19localized} and non-linear multi-task learning \cite{Ciliberto17multitask} that bridges statistical learning theory and image restoration. The estimator is written as a minimizer of an energy with a learnable prior defined on patches.
    \item A theoretical analysis of the estimator with explicit quantities depending on the local dependency properties of the problem.
    \item Two efficient implementations: For the MSE-based solver, we solve a linear system with conjugate gradient. For the Euclidean norm-based solver, we introduce a splitting scheme that alternates between conjugate gradient and dual coordinate ascent.
    \item An experimental validation of the methods on standard benchmarks for non-blind deblurring and upsamling. We achieve results comparable or better than standard variational methods.
    Our goal is not to beat the state of the art but to showcase the practical abilities of this general framework to solve image restoration problems.
\end{itemize}

\textbf{Related work.}
Image restoration methods can be divided into three groups.

\textit{Example-based methods} restore patches based on similar ones taken from an external dataset \cite{freeman02example, sun10contextconstrained} or the image at hand by exploiting self similarities in a given neighborhood \cite{dabov07image, buades05nonlocal, katkovnik10local} or at different scales \cite{ glasner09srsingleimage}.
For example, \cite{freeman02example} addresses image upsampling by comparing low-resolution patches in the target image with those in an external dataset retrieving the nearest neighbors, and copying and pasting their high-resolution version into the restored image.
In \cite{dabov07image}, the authors group and average similar patches in the image to remove noise.

\textit{Energy-based methods} minimize an objective function typically composed of a fitting term enforcing \cref{eq:formationmodel} and a penalty term favoring solutions exhibiting features of natural images.
For instance, the total variation prior \cite{rudin92totalvariation, wang08alternating, krishnan09hyperlaplacian} favors images with sharp edges, while the prior on patches of \cite{zoran11learning} uses the centroids of a Gaussian mixture model (GMM) to guide the restoration. 
A drawback of these methods is that they rely on handcrafted priors which might not capture all aspects of natural images.

\textit{Learning-based methods} minimize an empirical risk based on a dataset of pairs of clean and degraded images. In particular, dictionary learning methods estimate a small number of atoms to encode sparse models of patches and can be used in tasks such as denoising \cite{elad06image, mairal09nonlocal, mairal14sparse} and upsampling \cite{zeyde10scaleup}.
Most learning-based models are parametric (see \cite{takeda07kernel, chatterjee12patch} for counter examples) and learnt in a supervised manner, often as convolutional neural networks (CNNs), especially in problems like denoising \cite{zhang17beyond}, upsampling \cite{dong16srcnn} or deblurring \cite{zhang17beyond}.
Recently, data-driven and energy-based methods have also been combined to learn the minimizer of energies with tunable parameters in the regularizer \cite{zhang17fullyconvolutional, chen17trainable}.

\begin{figure}[t!]
    \centering
    \includegraphics[width=0.50\textwidth]{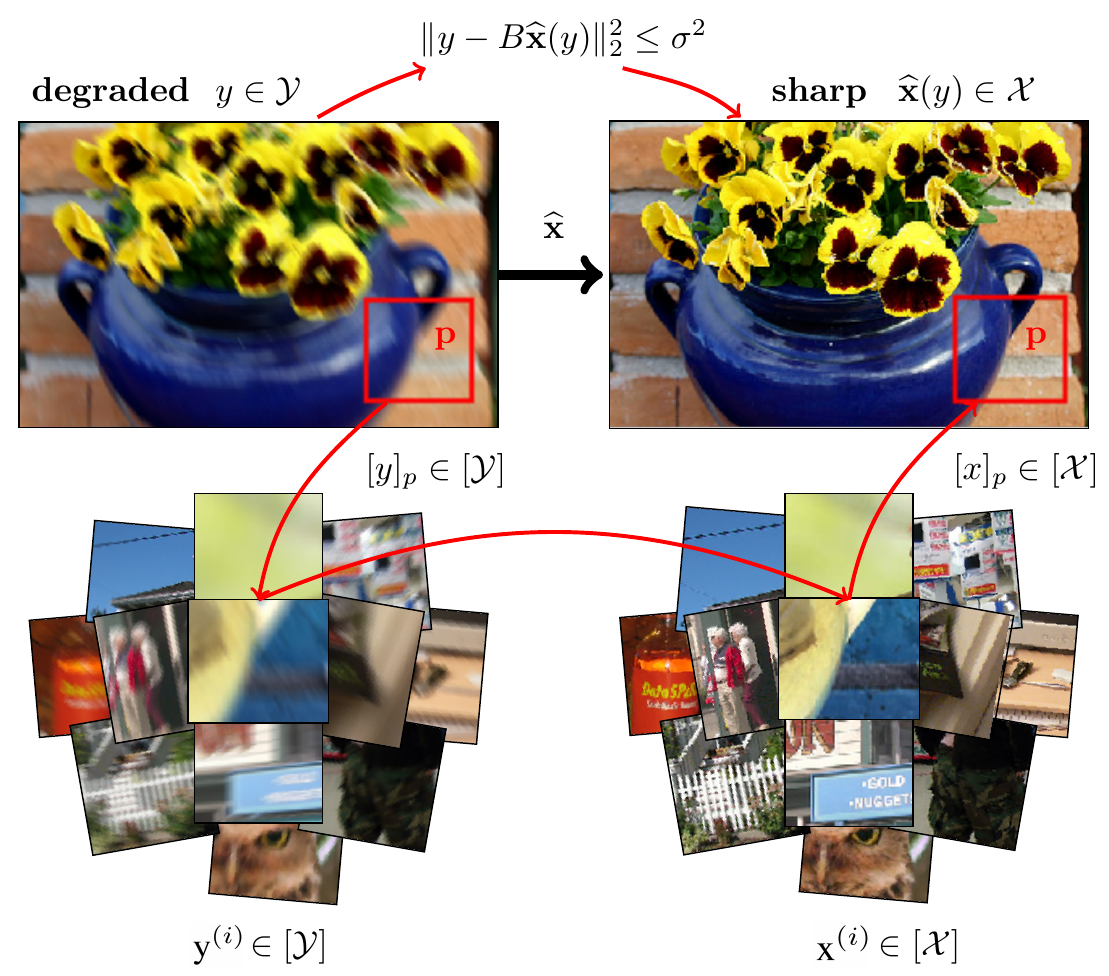}
    \caption{
    Given a degraded (blurry here) image $y$, the estimator $\xfunhat$ predicts a sharp image $\xfunhat(y)$. 
    The vector of weights $\alpha$ is computed by comparing the patches $y_p$ from the image to the patches $\yp^{(i)}$ from an external set, associated with clean versions $\xp^{(i)}$. 
    These weights are used to construct an energy function that favors sharp patches $x_p$ similar to samples $\xp^{(i)}$. 
    Finally, the estimated sharp image is a minimizer of this energy, penalized with the formation model of \cref{eq:formationmodel}.}
    \label{fig:diagram}
\end{figure}

\section{Proposed framework}\label{sec:localimagerestoration}
Let $\calX$ be the space of natural images and $\calY$ the space of degraded ones.
The goal of image restoration is to estimate a function ${\bf x}:\calY\rightarrow\calX$ that computes the sharp image ${\bf x}(y)$\footnote{We will denote by ${\bf x}$ functions from $\calY$ to $\calX$ and by $x,y$ elements of $\calX, \calY$.} from a degraded image $y$. 
Concretely, the degradation process is modeled in terms of a probability distribution $\rho$ on $\calX \times \calY$, and the goal is to find the function~$\xfun^\star:\calY\xrightarrow{}\calX$ that minimizes the expected risk
\begin{equation}\label{eq:expectedrisk}
  \calE(\xfun) = \Expect_{(y,x)\sim\rho}~ L(\xfun(y),x),
\end{equation}
using only the observed dataset $(y^{(i)}, x^{(i)})_{i=1}^n$ sampled from $\rho$, where $L$ is a loss between images. 
The formulation above is very general and standard to supervised learning problems \cite{vapnik2013nature}. Here, instead, we want to specialize the learning process to leverage the properties of the image formation model. 
By enforcing \eqref{eq:formationmodel}, we can restrict the class of estimators to functions $\xfun:\calY \to \calX$ satisfying $\|y-B \xfun(y)\|_2^2\leq\sigma^2$ for all $y \in \calY$. 
This is equivalent \cite{Ciliberto17multitask} to  
\begin{equation}\label{eq:estimatorsigma}
    \xfun^\star_\sigma(y) = \argmin_{\|y - Bx\|^2 \leq \sigma^2}~\Omega_{L}^{\pi}(x~|~y), \hspace{0.3cm}\text{where}\hspace{0.3cm} \Omega_{L}^{\pi}(x~|~y) = \mathbb{E}_{x'\sim\pi_y}L(x,x'),
\end{equation}
where $\pi_y(x) = \rho(x~|~y)$ is the conditional distribution of $x$ given $y$. Note that the conditional distribution $\pi$ is unknown, and so the quantity $\Omega_{L}^{\pi}(x~|~y)$ cannot in general be computed exactly.
By Lagrangian duality, we can rewrite for a suitable $\gamma \geq 0$ the optimization problem as
\begin{equation}\label{eq:exactestimator}
    \argmin_{x\in\calX}~\Omega_L^{\pi}(x~|~y) + \frac{\gamma}{2}\|y - Bx\|_2^2.
\end{equation}

\subsection{Leveraging local priors: localized structured prediction}
\label{sec:localpriors}
We can further specialize the class of estimators considered by leveraging the strong structural properties at a local level that are common in natural images \cite{mairal14sparse}.
Let us consider decompositions of the images $x$ and $y$ into overlapping patches indexed by $p\in P$, where $P$ is the set of patch identifiers. Denote by $x_p$ and $y_p$ the $p$-th patch of the sharp image and degraded image, and denote by $[\calX], [\calY]$ the corresponding spaces of patches. 
In vector form, $x_p$ and $y_p$ correspond to a subset of coordinates of $x, y$. We assume that the image degradation problem is \textit{local}, i.e., $y_p$ is determined by $x_p$ and~$B$. Thus, estimating $x$ from $y$ can be seen as a multi-task learning problem, where the function~$y \mapsto x$ is factored in a group of functions $y_p \mapsto x_p$ collaboratively solving the global problem. 
We similarly assume that the degradation function is stationary with respect to the position of the patch in the image, that is, the functions $y_p \mapsto x_p$ are independent of the patch identifier $p$. As in \cite{ciliberto19localized} we formalize 
the behavior described above as follows.

{\bf Between-locality assumption:}
$x_p$ is conditionally independent from $y$ given $y_p$. 
Moreover the following holds for all patches $p, p^\prime\in P$,
\begin{equation}\label{eq:between-locality-stationarity1}
    \pi_{y_p}(x_p) = \pi_{y_{p^\prime}}(x_{p^\prime}).
\end{equation}
Note that assumption \eqref{eq:between-locality-stationarity1} uses the stationarity mentioned above, as the conditional distribution of patches $\pi_{y_p}(x_p)$ between $x_p$ and $y_p$ is independent of the index $p$.
It is then natural to consider losses $L$ in \eqref{eq:exactestimator} that measure the global error between two images in terms of errors on the single patches
\begin{equation}\label{eq:globalerroraspatcherrors}
    \textstyle{L(x,x^\prime) = \frac{1}{|P|}\sum_{p \in P} \ell(x_p, x^\prime_p)},
\end{equation}
where $\ell$ is a loss function between patches. For instance, one can take $\ell$ to be the mean-squared error $\|x_p-x_p'\|_2^2$, which decomposes additively over the pixels of the pair of patches, making the predictions independent for every pixel of the image. If one considers instead the non-squared Euclidean norm $\|x_p-x_p'\|_2$, then the problem is solved by taking into account inter-dependencies between overlapping patches \cite{jacob09group}.
Given assumption \eqref{eq:between-locality-stationarity1}, it is convenient to drop the patch index and designate patches without specifying that they come from a certain location of an image. In the rest of the paper we will denote by $\xp$ and $\yp$ two generic patches \footnote{We warn the reader about the similarity of notation between generic patches $\xp,\yp$ and full images $x,y$.} belonging to $[\calX]$ and $[\calY]$, respectively. Note that with this construction, the regularizer decomposes additively as
\begin{equation}
    \textstyle \Omega_L^{\pi}(x~|~y) = \frac{1}{|P|}\sum_{p \in P} \Omega_{\ell}^{\pi}(x_p~|~ y_p),\quad \textrm{where}, \quad \Omega_{\ell}^{\pi}(\xp~|~\yp) = \Expect_{\xp'\sim\pi_{\yp}}\ell(\xp,\xp'),
\end{equation}
for all $ \xp\in[\calX], \yp \in [\calY]$. Then, our problem takes the form
\eqal{\label{eq:regression function}
\textstyle \argmin_{x\in\calX}~\frac{1}{|P|}\sum_{p \in P}\Omega_{\ell}^{\pi}(x_p~|~y_p) + \frac{\gamma}{2}\|y - Bx\|_2^2,
}
enforcing at the same time prior knowledge of the problem at the global level via the variational representation of the formation model \eqref{eq:formationmodel} and at the local level via the data-driven regularizer on patches. However, as we do not have access to the conditional distribution $\pi_{\yp}(\xp)$, this term cannot be computed exactly. We now show how to approximate it from supervisory data.

\subsection{Construction of the estimator}\label{subsec:construcitonoftheestimator}
The goal is to learn the function $\yp\mapsto \Omega_{\ell}^{\pi}(\xp~|~\yp)$ for every $\xp\in[\calX]$  from a dataset of $m$ pairs of patches $(\yp^{(i)}, \xp^{(i)})$ extracted from images degraded by $B$ and the associated clean images. We want to construct the estimator simultaneously for every sharp patch $\xp$. In order to do this, following \cite{ciliberto20general, ciliberto19localized}, we consider estimators that are linear combinations of the loss function evaluated at the data points
\begin{equation}\label{eq:omega1estimator}
 \textstyle{\Omega_{\ell}^{\pi}(\xp~|~\yp) \approx \sum_{i=1}^{m}\alpha_{i}(\yp)\Lp(\xp,\xp^{(i)})},
\end{equation}
where $\alpha_{i}(\yp)$ are scalar coefficients learnt from the dataset of patches. For any degraded patch $\yp$, $\alpha_{i}(\yp)$ can be interpreted as a measure of similarity between $\yp$ and the patch $\yp^{(i)}$ of the training dataset. 
Finally, we define the estimator $\xfunhat(y)$ as:
\begin{equation}\label{eq:finalestimator}
\xfunhat(y)\in\argmin_{x\in\calX}~
    \frac{1}{|P|}\sum_{p\in P}\sum_{i=1}^{m}\alpha_{i}(y_p)\Lp(x_p,\xp^{(i)}) + \frac{\gamma}{2}\|y-Bx\|_2^2.
\end{equation}
We provide two ways of computing the coefficients $\alpha_i$.

\textbf{Kernel ridge regression (KRR).} 
To derive the estimator we start from the observation that when we fix $\xp \in [\calX]$, then $\textstyle{\Omega_{\ell}^{\pi}(\xp~|~\yp)}$ is just a function from $[\calY]$ to $\R$, that we denote $g_{\xp}^\star: [\calY]\rightarrow\Rspace{}$ for convenience. The idea now is to learn $g_\xp^\star$ from data. In particular, note that $g^*_\xp$ is the minimizer over $\{g:[\calY]\rightarrow\Rspace{}\}$ of the least-squares expected risk $\Expect_{(\xp',\yp')\sim\rho}(g(\yp') - \ell(\xp, \xp'))^2$.
Then, $g_\xp^\star$ can be estimated from a dataset $(\yp^{(i)}, \ell(\xp, \xp^{(i)}))$ of $m$ pairs using regularized empirical risk minimization and standard techniques from supervised machine learning as kernel ridge regression \cite{scholkopf2002learning}. More specifically, let $\calG$ be a reproducing kernel Hilbert space (RKHS) with associated positive kernel $k:[\calY]\times[\calY]\rightarrow\Rspace{}$ over pairs of patches of degraded images \cite{aronszajn1950theory}. Then, for every $\xp \in [\calX]$, the function~$g_\xp^\star$ can be approximated by $\widehat{g}_\xp$ as
\begin{equation}\label{eq:rerm}
 \widehat{g}_\xp ~=~ \textstyle{\argmin_{g\in\calG}\frac{1}{m}\sum_{i=1}^m(g(\yp^{(i)}) - \ell(\xp, \xp^{(i)}))^2 + \frac{\lambda}{2}\|g\|_{\calG}^2},
\end{equation}
where $\|g\|_{\calG}$ denotes the norm of $g$ in ${\cal G}$ and $\lambda$ is a regularization parameter. The solution takes the form of~\eqref{eq:omega1estimator} where the vector $\alpha(\yp)$ is the solution of the linear system
\begin{equation}\label{eq:alphakrr}
(K  + m\lambda  I) \alpha (\yp) = v(\yp)\in \R^{m}.
\end{equation}
Here, $v(\yp)_i = k (\yp,\yp^{(i)})$ and
$K$ is a matrix in $\Rspace{m\times m}$ defined by~$K_{ij}~=~k (\yp^{(i)}, \yp^{(j)})$,

\textbf{Nadaraya-Watson (NW).}
Conversely, $\yp\mapsto \Omega^\pi_{\ell}(\xp, \yp)$ can also be estimated with the Nadaraya-Watson estimator \cite{nadaraya64estimating}, a version of kernel density estimation \cite{parzen1962estimation} for regression. We have
\begin{equation}\label{eq:alphanw}
    \alpha(\yp) = \frac{1}{\mathbbm{1}^\top v(\yp)} v(\yp)\in\Rspace{m},
\end{equation}
where $\mathbbm{1}$ is a vector of size $n$ with all entries equal to 1.
Contrary to \eqref{eq:alphakrr}, all coefficients are positive and their computation does not involve the kernel matrix $K$. 
In the following \cref{sec:stats}, we analyze the statistical properties of the estimator \eqref{eq:finalestimator} when the $\alpha$'s are computed using KRR.

\section{Statistical Guarantees}\label{sec:stats}

The goal of this section is twofold. First we show that the estimate $\xfunhat$ in \cref{eq:finalestimator} approaches the optimal estimator $\xfun^\star$ minimizing \eqref{eq:expectedrisk} when the size of the patch dataset tends to infinity. Moreover, we explicitly characterize the convergence rate at which this happens. For simplicity, and based on the analysis of \cite{ciliberto19localized}, we assume that the coefficients $\alpha$ are computed using kernel ridge regression~\eqref{eq:alphakrr}.

Let us first define the sampling scheme to generate patches from images.
We generate a dataset of clean and degraded pairs of patches $(\yp^{(i)}, \xp^{(i)})$ ($i=1,\dots,m$) by sub-sampling uniformly patches from $n$ pairs of clean and degraded-by-$B$ images.
We theoretically justify and validate experimentally why subsampling patches, which corresponds to $m\ll|P|n$, is a good choice.

In order to simplify the analysis, we work with the constrained version of the minimizer~$\xfunhat(y)$ defined in \cref{eq:finalestimator}, where the estimated regularizers \eqref{eq:omega1estimator} are minimized over the constraint~$\|y-Bx\|_2^2\leq \sigma^2$, instead of the penalized version.
Our main result is a bound on the excess risk $\calE(\xfunhat) - \calE(\xfun_\sigma^\star)$ depending on the number of images~$n$, the number of subsampled patches $m$, and interpretable constants describing local dependency properties of the problem. More specifically, the bound on the excess risk depends on the quantities $c_{B, \sigma}$ and $q$ defined as:
\begin{equation}
    c_{B,\sigma} = \frac{\Expect_{y\sim\rho_{\calY}}\operatorname{diam}(\calC_{B,\sigma}(y))}{\operatorname{diam}(\calX)}\leq 1,\quad 
     q=\frac{1}{|P|r^2}\sum_{p,p'\in P}\Expect_{y, y'}C_{p,p'}(y, y'),
\end{equation}
where $\operatorname{diam}$ stands for diameter; $\calC_{B,\sigma}(y) = \{x\in\calX~|~\|Bx-y\|_2\leq\sigma\}\subseteq\calX$ is the constraint set defined by the formation model \eqref{eq:formationmodel}; $C _{p,p'}(y,y') = k (y_p, y_{p'})^2 - k (y_p,y'_{p'})^2$ is a measure of similarity between $y$ and $y'$; and $r^2 = \sup_{\yp\in[\calY]}k(\yp,\yp)$ is the maximum squared norm of the features. We recall that $k$ is the kernel computed from the degraded patches used to train KRR (\cref{subsec:construcitonoftheestimator}). 
The constant $c_{B,\sigma}$ is a ratio between the expected diameter of the constraint set and the diameter of the full space of sharp images. The smaller the ratio, the more informative is the knowledge of the formation model \eqref{eq:formationmodel} for the learning task.
The constant $q$ measures the total correlation between patches of image $y$. 
It satisfies~$q\approx |P|$ if there is large correlation between patches and $q\approx 0$ if they are mostly independent. This is essentially the same quantity appearing in the analysis by \cite{ciliberto19localized}, 
which in that case is the total correlation of the parts of a generic structured object (see Example~1 of \cite{ciliberto19localized}).
To state the theorem we need a smoothness assumption on the loss and on the target distribution. Let the set of patches be defined respectively as $[\calX] = [-1,1]^{d_{\calX}\times d_{\calX}}$ and $[\calY] = \Rspace{d_{\calY}\times d_{\calY}}$, with $d_\calX, d_\calY \in \mathbb{N}$. Denote by $W^{s}_2(Z)$ the {\em Sobolev space} of smoothness $s > 0$ in a set $Z$, i.e., the space of functions with square-integrable weak derivatives up to order $s$ \cite{adams2003sobolev}. 

\noindent\textbf{Assumption on the loss $\boldsymbol{\ell}$.} Let $\ell_{\xp}$ be defined as $\xp'\mapsto\ell(\xp',\xp)$. There exists $C$ in $(0,\infty)$ and~$s\geq (d_{\calX}^2 +1)/2$ such that $\sup_{\xp \in [\calX]} \|\ell_{\xp}\|_{W^{s}_2([\calX])} \leq C$.

\noindent\textbf{Assumption on the target distribution.}
Let $m \in \N$ s.t. $W^m_2([\calY]) \subseteq {\calH}$ where $\calH$ is the reproducing kernel Hilbert space associated to the kernel $k$ \cite{aronszajn1950theory, berlinet2011reproducing}. For example, $m = d_{\calY}^2/2 + 1$ for the Laplacian kernel $k(\yp,\yp') = e^{-\|\yp-\yp'\|}$. 
We require $\rho(x|y)$ to be a density and satisfy $\rho(x|y) \in W^{m}_2([\calX] \times [\calY]).$
\begin{theorem}[Generalization bound]\label{th:generalizationbound} Assume that between-locality \eqref{eq:between-locality-stationarity1} holds and that $\rho$ and $\ell$ satisfy the assumptions above. If the regularization parameter of KRR \eqref{eq:alphakrr} is set to $\lambda = r(\frac{1}{m} + \frac{q}{|P|n})^{1/2}$, then the following holds:
\begin{equation}\label{eq:generalizationbound}
    \left|\Expect\calE(\xfunhat) - \calE(\xfun^\star_\sigma)\right| \leq C c_{B,\sigma}^{1/2}\left(\frac{1}{m} + \frac{1+q}{|P|n}\right)^{1/4},
\end{equation}
where $C\geq 0$ is a constant independent of the $n, m, P, B$ and $\sigma$.
\end{theorem}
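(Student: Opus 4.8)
The plan is to follow the localized structured prediction template of \cite{ciliberto19localized}: first reduce the excess risk to the estimation error of the conditional mean embedding of the loss, then control that error with a kernel ridge regression analysis adapted to the two-level sampling scheme (subsampling $m$ patches from a pool of $|P|n$ correlated patches). For Step~1, the loss assumption lets us factor $\ell(\xp,\xp')=\langle\psi(\xp),\psi(\xp')\rangle$ with a bounded feature map $\psi$ -- the Sobolev smoothness $s\ge(d_\calX^2+1)/2$ guarantees that $\ell_\xp$ lies, uniformly in $\xp$, in a fixed RKHS, hence such a factorization with bounded norm. Writing $h^\star(\yp)=\Expect_{\xp'\sim\pi_\yp}\psi(\xp')$ and $\widehat h(\yp)=\sum_i\alpha_i(\yp)\psi(\xp^{(i)})$ for the true conditional mean embedding and its KRR estimate \eqref{eq:rerm}, we get $\Omega^\pi_\ell(\xp|\yp)=\langle\psi(\xp),h^\star(\yp)\rangle$ and $\widehat\Omega_\ell(\xp|\yp)=\langle\psi(\xp),\widehat h(\yp)\rangle$. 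Since $\xfunhat(y)$ minimizes the estimated energy and $\xfun^\star_\sigma(y)$ the true one, both over the same constraint set $\calC_{B,\sigma}(y)$, a standard two-sided argument bounds the pointwise excess risk by twice the oscillation of $x\mapsto\widehat\Omega(x|y)-\Omega^\pi(x|y)$ over $\calC_{B,\sigma}(y)$, yielding
\[
\calE(\xfunhat)-\calE(\xfun^\star_\sigma)\;\le\;\tfrac{2}{|P|}\,\Expect_{y}\Big[\,\operatorname{diam}(\calC_{B,\sigma}(y))\textstyle\sum_{p\in P}\big\|\widehat h(y_p)-h^\star(y_p)\big\|\Big].
\]
Applying Cauchy--Schwarz over $y$ and then $\Expect_y\operatorname{diam}(\calC_{B,\sigma}(y))^2\le\operatorname{diam}(\calX)\,\Expect_y\operatorname{diam}(\calC_{B,\sigma}(y))=c_{B,\sigma}\operatorname{diam}(\calX)^2$ produces the prefactor $c_{B,\sigma}^{1/2}$, reducing the theorem to bounding $\Expect\big[\Expect_y\tfrac1{|P|}\sum_{p}\|\widehat h(y_p)-h^\star(y_p)\|^2\big]$, the averaged mean-square error of the conditional mean embedding estimator.

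For Step~2, I would use the integral-operator decomposition of KRR to split this error into bias $+$ variance. The target-distribution assumption ($\rho(x|y)\in W^m_2$ with $W^m_2([\calY])\subseteq\calH$) places $\yp\mapsto h^\star(\yp)$ in the RKHS uniformly in $\xp$, so the regularization bias is $O(\lambda)$; boundedness comes from $r^2=\sup_\yp k(\yp,\yp)<\infty$. The variance is $O\big(\tfrac{r^2}{\lambda}\,v_{m,n}\big)$, where $v_{m,n}$ is the concentration rate of the empirical covariance and cross-covariance built from the $m$ subsampled patches. Computing $v_{m,n}$ is the heart of the proof: conditioning on the $n$ images, drawing $m$ patches uniformly contributes a term $\tfrac1m$; unconditionally, the $|P|n$ patches are not independent because the $|P|$ patches of one image are correlated, and a direct second-moment computation of the empirical feature mean shows its fluctuation is governed by $\tfrac1{|P|^2n}\sum_{p,p'}\Expect\,k(y_p,y_{p'})^2$ relative to the i.i.d.\ baseline $\tfrac1{|P|n}\sum_p\Expect\,k(y_p,y_p)$ -- which, after normalizing by $r^2$ and invoking the definitions of $q$ and $C_{p,p'}(y,y')$, equals $O\big(\tfrac{1+q}{|P|n}\big)$. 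Hence $v_{m,n}\lesssim\tfrac1m+\tfrac{1+q}{|P|n}$; an operator Bernstein inequality for the dependent sum makes this rigorous, and since the theorem is stated in expectation over the data only the in-expectation version is needed.

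Step~3 is the balancing: combining the above, $\Expect\big[\Expect_y\tfrac1{|P|}\sum_p\|\widehat h(y_p)-h^\star(y_p)\|^2\big]\lesssim\lambda+\tfrac{r^2}{\lambda}\big(\tfrac1m+\tfrac{1+q}{|P|n}\big)$, and the choice $\lambda=r\big(\tfrac1m+\tfrac{q}{|P|n}\big)^{1/2}$ from the statement balances the two terms up to the $\tfrac1{|P|n}$ slack, giving $O\big(r(\tfrac1m+\tfrac{1+q}{|P|n})^{1/2}\big)$. Taking the square root from Step~1 and reinstating $c_{B,\sigma}^{1/2}$ yields the claimed bound, with the constant $C$ in \eqref{eq:generalizationbound} absorbing $\operatorname{diam}(\calX)$, the loss-smoothness constant, and the operator-norm constants.

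The main obstacle is the variance computation in Step~2: obtaining the rate $\tfrac1m+\tfrac{1+q}{|P|n}$ with the exact correlation constant $q$ requires a careful operator-Bernstein / second-moment analysis of the empirical (cross-)covariance operators under two-level, within-image-dependent sampling -- cleanly separating the subsampling fluctuation $\tfrac1m$ from the image-level fluctuation $\tfrac{1+q}{|P|n}$ and recognizing $\sum_{p,p'}\Expect\,C_{p,p'}(y,y')$ as precisely the quantity controlling the latter. By contrast, the comparison inequality, the source-condition bias bound from the Sobolev assumptions, and the final balancing are routine once the machinery of \cite{ciliberto19localized} is in place.
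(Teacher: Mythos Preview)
Your proposal is correct and follows essentially the same architecture as the paper: a comparison inequality that extracts the $c_{B,\sigma}^{1/2}$ prefactor by measuring the oscillation of the surrogate over $\calC_{B,\sigma}(y)$, followed by the KRR bias--variance analysis under two-level sampling from \cite{ciliberto19localized} (which the paper simply cites as Thm.~4 there), and finally the balancing of $\lambda$. One small imprecision: you write the loss factorization symmetrically as $\ell(\xp,\xp')=\langle\psi(\xp),\psi(\xp')\rangle$, whereas the paper (and the SELF framework) uses an \emph{asymmetric} decomposition $\ell(\xp,\xp')=\langle\psi(\xp),\phi(\xp')\rangle_{\calH}$ with two distinct bounded maps --- this matters because, e.g., the Euclidean loss is not a positive-definite kernel, and the paper's comparison inequality needs the diameter of $\Psi(\calC_{B,\sigma}(y))$ in the feature space (your passage from that diameter to the Euclidean $\operatorname{diam}(\calC_{B,\sigma}(y))$ is a separate step the paper isolates, using the explicit Abel-kernel form of $\psi$).
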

The main difference with the analysis of \cite{ciliberto19localized} is the constant $c_{B,\sigma}$. In \cref{app:theoreticalanalysis} the theorem is proved under more general assumptions on $\rho, \ell$ relating their regularity to the reproducing kernel Hilbert space associated to $k$. The proof is based on the analysis by \cite{Ciliberto17multitask} and \cite{ciliberto19localized}.
The assumption on $\ell$ that we make here for the sake of readability is quite mild and it is satisfied by many loss functions including squared loss and Euclidean loss (see \cref{app:assumptionslosstarget}). 
The condition on the data distribution for \cref{th:generalizationbound} to hold are common for this kind of analysis \cite{caponnetto2007optimal} and in structured prediction settings \cite{ciliberto2016consistent,ciliberto19localized,ciliberto20general}. See in \cref{app:theoreticalanalysis} a more general statement with respect to the kernel $k$.

\textbf{Effect of patch correlation.} Theorem \ref{th:generalizationbound} states that the statistical performance of the estimator depends on $q$, i.e., the amount of correlation between patches. In particular, large correlation
translates into $q \approx |P|$, leading to a rate of $O(n^{-1/4})$, while small correlation improves the rate up to $O(m^{-1/4})$ (remember that $m \gg n$ in general) and up to $O((n|P|)^{-1/4})$ when taking~$m\propto n|P|$. This is particularly beneficial in problems with a large number of patches and allows to achieve low prediction error even with $n$ small, i.e., few training examples. The intuition is the following: when the patches from which we learn are independent, the estimators at the patch level have better performance than when they are highly correlated, as more information is present at learning time. This creates a trade-off between the size of the patches and statistical performance: for large patches the between-locality property is more likely to hold, but dependency is stronger due to overlapping. In our experiments, we take $n=4$, $|P|\approx 10^5$ ({\em e.g.} a 256 $\times$ 256 image contains $|P|=62001$ $8 \times 8$ overlapping patches) and we show that $q\ll|P|$ in \cref{app:constantq}. This justifies why selecting~$m=10^4$ patches is enough to obtain good performance.

\textbf{Effect of the fitting term.}
The ratio $c_{B,\sigma}\leq 1$  measures the statistical complexity of learning with the loss \eqref{eq:expectedrisk} under the constraint $\|y-Bx\|_2^2\leq \sigma^2$.
Indeed, if the diameter of the constraint set is small, then the estimator will require less samples to achieve small error.
If the constraint is removed, the constant becomes $\lim_{\sigma\to\infty} c_{B,\sigma} = 1$. Moreover, it can be explicitly bounded for multiple settings. For denoising it is $\min(\sigma/\operatorname{diam}(\calX),1)$, and so the magnitude of the noise appears as a multiplying factor in the bound \eqref{eq:generalizationbound}. For upsampling with a factor $k$ and noise $\sigma=0$ it is $k^{-1/2}$, and for inpainting with a proportion of $s$ missing pixels from the image and $\sigma=0$ is $s^{1/2}$ (\cref{app:comparisoninequality}).

\section{Algorithm} \label{sec:algorithm}

\begin{algorithm}[t]
 \caption{SDCA for solving \eqref{eq:updatezp} with Euclidean loss and NW estimator.}
\label{alg:sdcaeuclidean}
\begin{algorithmic}
\STATE $z_p^1 = x_p + 1/\beta\sum_{i=1}^m\mu_i^{(0)}$ \;
\FOR{$k = 1, \dots, K$}
 \STATE Pick $i$ at random in $\{1,...,m \}$ or perform gap sampling \;
 \STATE $b_i^{k} = z_p^{k} - \mu_i^{k}/\beta - \xp^{(i)}$ \;
\STATE $\mu_i^{k+1} = -b_i^{k}\min(\alpha_i(y_p)/\|b_i^{k}\|_2, \beta)$ \;
\STATE $z_p^{k+1} = z_p^{k} + (\mu_i^{k+1} - \mu_i^{k})/\beta$ \;
\ENDFOR
\end{algorithmic}
\end{algorithm}

We now present the algorithmic scheme to compute our estimator for any loss $\ell$ and detail the specific cases of the MSE and Euclidean loss. For the sake of presentation, we remove the factor $|P|^{-1}$ in~\eqref{eq:finalestimator} and re-define the parameter $\gamma$ as $\gamma|P|$.

\textbf{Least squares.}
When $\ell$ is the mean-squared error, the solution of \eqref{eq:finalestimator} can be obtained by solving
\begin{equation}\label{eq:msesystem}
   \textstyle{\Big( \gamma B^\top B + \sum_{p \in P} \sum_{i=1}^m \alpha_i(y_p) R_p^\top R_p \Big) x = \gamma B^\top y + \sum_{p \in P} \sum_{i=1}^m  \alpha_i(y_p) R_p^\top \xp^{(i)}},
\end{equation}
where $R_p$ is a matrix that extracts the $p$-th patch $x_p$ from an image $x$ and $R_p^\top$ replaces $x_p$ at its initial location in $x$. 
This can be done using conjugate gradient descent.
The problem is easy to solve because it is decomposable pixel-wise since the mean-squared error is separable, as already noted in \cref{sec:localpriors}.
This is computationally very efficient, but in general not entirely satisfactory from a modeling viewpoint, since each patch of the sharp image represents an element composed of pixels that are shared with other patches, and thus each pixel is strongly dependent on its neighborhood.

\textbf{Generic loss.} For a generic loss $\ell$, we use the half-quadratic splitting (HQS) algorithm \cite{geman95halfquadratic},  
decomposing the energy into simpler sub-problems.
We introduce $|P|$ variables $z_p$ and solve
\begin{equation}\label{eq:hqs}
    \min_{x, z_1,\dots,z_P}~\sum_{i=1}^m \sum_{p \in P} \alpha_i(y_p) \ell(z_p, \xp^{(i)}) + \frac{\gamma}{2} \| y - Bx \|_2^2 + \frac{\beta}{2} \| x_p - z_p\|_2^2.
\end{equation}
HQS outperforms in our experiments the alternating direction method of multipliers (ADMM) \cite{parikh14proximal}.
By choosing $x_p^{(0)}$ as $y_p$ for all $p$, the HQS algorithm performs at each iteration $t$
\begin{eqnarray}
    z_p^{(t+1)} & = & \textstyle{\argmin_{z_p} \sum_{i=1}^n \alpha_i(y_p) \ell(z_p^{(t)}, \xp^{(i)}) + \beta^{(t)}/2 \| z_p^{(t)} - x_p^{(t)}\|_2^2, \quad \forall p \in P,} \label{eq:updatezp}\\
    x^{(t+1)} & = & \textstyle{\argmin_{x} \sum_{p \in P} \beta^{(t)} \| z_p^{(t+1)} - x_p^{(t)}\|_2^2 + \gamma/2 \| y - Bx^{(t)} \|_2^2,} \label{eq:updatex}\\
    \beta^{(t+1)} & = & \delta \beta^{(t)},
\end{eqnarray}
where $\beta^{(0)}$ a positive scalar and $\delta>1$ such that $\beta^{(t)} = \delta^t \ \beta^{(0)}$ exponentially grows to $+\infty$.
The $x$ update \eqref{eq:updatex} is a least-squares problem analogous to \eqref{eq:msesystem}, and the $z_p$ updates \eqref{eq:updatezp} have the form of a weighted-by-$\alpha$ quadratically regularized empirical risk minimization problem.
As shown in \cref{app:convexity}, convexity of the objective is guaranteed if the coefficients are computed using KRR and $\ell$ is convex in its first argument, so the problem remains tractable. 
The update can be solved using algorithms for finite-sums such as full-gradient methods \cite{marteau19newton} or stochastic methods where the individual losses are not required to be all convex \cite{shalev16dualfree} (see \cref{app:generalalgorithms}).
We now present an efficient algorithm when $\ell$ is the Euclidean loss and the $\alpha$'s are computed using the NW estimator~\eqref{eq:alphanw}.

\textbf{Euclidean loss with Nadaraya-Watson estimator.} The individual losses in the finite sum of \eqref{eq:updatezp} are all convex when using the NW estimator, as the $\alpha$'s are all positive. In this case, we can use the stochastic dual coordinate ascent (SDCA) algorithm of \cite{shalev2013stochastic}. It is a stochastic dual algorithm such that at each iteration it selects an index $i\in[n]$ at random, and maximizes the dual objective w.r.t. to the dual variable $\mu_i$. For the Euclidean loss, this maximization has a closed form solution as it is a projection into an $\ell_2$ ball.
The method has linear convergence, is hyperparameter free, and the dual gap can be used as a termination criterion, yielding a fast and efficient solver for problem~\eqref{eq:updatezp}.
Moreover, we perform non-uniform sampling by selecting the index $i\in[n]$ according to the magnitude of the individual dual gaps (see \cref{app:sdca}). 
The algorithm is detailed in \cref{alg:sdcaeuclidean} and our code for SDCA with gap sampling is given in \cref{app:sdcaeuclidean}.

\section{Experiments}
\label{sec:experiments}
We experimentally validate the proposed algorithms for upsampling and deblurring, showing results on par with standard variational methods.
Our code will be made available and further experimental results can be found in \cref{app:experiments} and \cref{app:additionalimages}.

\textbf{Implementation details.} We consider the overlapping patches of the image as its set of parts $P$, as done in \cite{sun10contextconstrained, zoran11learning}.
We randomly select $m=10,000$, $8 \times 8$-patches from $n=4$ training images of the BSD300 dataset \cite{martin01database} to build the external dataset.
Increasing $m$ further improves the results and this value is thus a good compromise between speed and accuracy.
To compute $\alpha$, we use a Gaussian kernel comparing the discrete cosine transform (DCT) coefficients of the patches, as in \cite{dabov07image}.
When predicting $z_p$ with SDCA, the training patches $\xp^{(i)}$ and the current value of $x_p$ are centered for better reconstruction \cite{mairal09nonlocal, mairal14sparse, zoran11learning, sun10contextconstrained}.
The mean of $x_p$ is added to $z_p$ at the end of the SDCA loop.
We recompute $\alpha$ at each iteration $t$ using $x^{(t-1)}$, since it is a better estimate than~$y$.

\begin{figure}[t]
    \centering
    \adjustbox{max width=0.99\textwidth}{
    \begin{tabular}{cccc}
        \begin{subfigure}{0.25\textwidth}\includegraphics[scale=0.47]{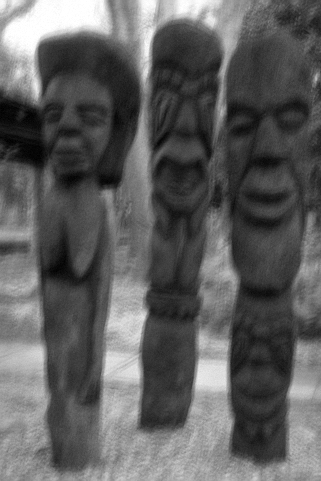}\caption*{Blurry image (20.57dB).}\end{subfigure} &  
        \begin{subfigure}{0.25\textwidth}\includegraphics[scale=0.34]{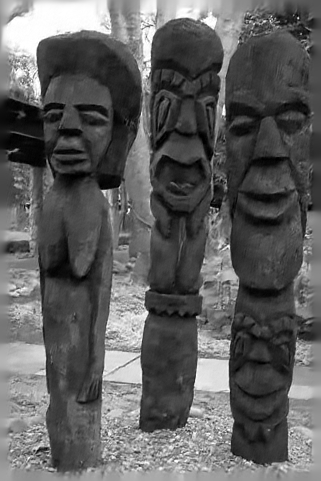}\caption*{EPLL (24.51dB).}\end{subfigure} & 
        \begin{subfigure}{0.25\textwidth}\includegraphics[scale=0.47]{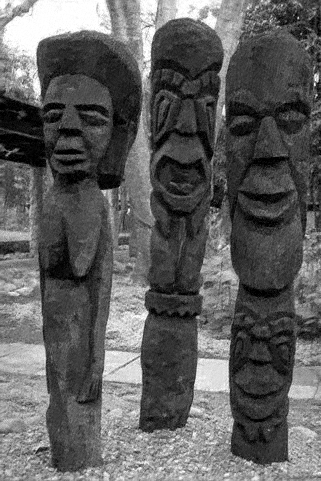}\caption*{Ours ($\ell_2$, 25.45dB).}\end{subfigure} &
        \begin{subfigure}{0.25\textwidth}\includegraphics[scale=0.47]{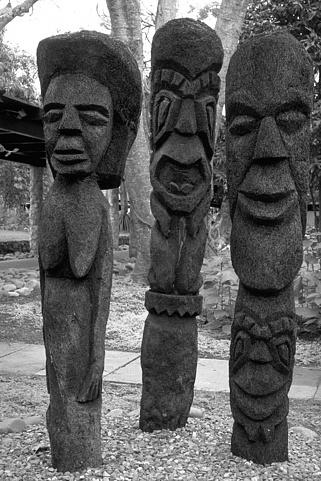}\caption*{Ground-truth image.}\end{subfigure} \\
    \end{tabular}
    }
    \caption{A non-blind deblurring example. Our method with the Euclidean loss achieves the best result in terms of PSNR (number in parenthesis) compared to EPLL.
    Visually, we also restore more details such as textures on the wooden totems whereas EPLL smooths these areas.}
    \label{fig:nbd}
\end{figure}

\textbf{Non-blind deblurring.}
In this setting, $B$ is the matrix form of a convolution with a blur kernel.
We use the experimental protocol of \cite{zoran11learning}: we blur the 68 test images of \cite{martin01database} with the two kernels of \cite{krishnan09hyperlaplacian} and add $1\%$ Gaussian noise.
We compare in \cref{tab:nbd} the proposed algorithms with the two variational methods of \cite{krishnan09hyperlaplacian} (handcrafted prior on image gradients) and \cite{zoran11learning} (GMM prior on patches).
Bold font indicates the best performer, and if marginally below, the second best too.
For the MSE-based method, we set $\gamma=5000$ and for the $\ell_2$-based solvers, we set $\gamma=3200$, $\beta_0 = 3$, $\delta = 2$ and $T=8$.
For the two kernels, the $\ell_2$-based solver's performance is on par with EPLL and outperforms HL by margins of about 1.5dB.
Note that the MSE version of our solver does not do as well as EPLL and its $\ell_2$- version, and achieves similar results to the gradient-based prior of HL.
It also shows that the choice of the loss, besides the kernel and computation of the $\alpha$, is an important aspect of our approach.
An example is shown in \cref{fig:nbd}.

\begin{table}[h]
\caption{Average PSNR for non-blind deblurring. The baselines are taken from \cite{zoran11learning}.}
    \centering
    \begin{tabular}{ccccc} \toprule
        Methods & HL \cite{krishnan09hyperlaplacian} & EPLL \cite{zoran11learning} & Ours~(MSE) & Ours~($\ell_2$) \\ \midrule
        Kernel 1 ($17 \times 17$) & 25.84 & \textbf{27.17} & 25.68 & \textbf{27.21} \\ 
        Kernel 2 ($19 \times 19$) & 26.38 & 27.70 & 26.31 & \textbf{27.96}  \\ \bottomrule
    \end{tabular}
    \label{tab:nbd}
\end{table}

\textbf{Upsampling.}
We follow \cite{sun10contextconstrained, farsiu04challengesSR} and implement $B$ as a convolution with an anti-aliasing Gaussian filter of standard deviation of 0.8, followed by decimation of factor 2. We evaluate our method in the test set Set5 used in \cite{zeyde10scaleup}, and compare it in \cref{tab:sisr} with bicubic interpolation and two variational methods \emph{specialized} for upsampling; KSVD \cite{zeyde10scaleup} and ANR \cite{timofte13anchored} (dictionary learning-based). For the $\ell_2$-based solver, we set $\gamma=6000$, $\beta_0 = 0.5$, $\delta=2$, $T=3$. We obtain consistent better results than bicubic interpolation (+1.7dB) and slightly worse (-0.5dB) than specialized up-sampling methods KSVD and ANR.
Note that we rely on simple DCT features and better results might be obtained with more adequate features. Due to a lack of space, qualitative results can be found in \cref{app:additionalimages}.

\begin{table}[h!]
\caption{PSNRs for upsampling.}
    \centering
    \begin{tabular}{lcccc}\toprule
        Methods & Bicubic & KSVD \cite{zeyde10scaleup} & ANR \cite{timofte13anchored} & Ours~($\ell_2$) \\ \midrule
        Baby & 37.07 & 38.25 & \textbf{38.44} & 38.29 \\ 
        Bird & 36.81 & 39.93 & \textbf{40.04} & 38.88 \\
        Butterfly & 27.43 & \textbf{30.65} & 30.48 & 29.97\\
        Head & 34.86 & 35.59 & \textbf{35.66} & \textbf{35.63}\\
        Woman & 32.14 & \textbf{34.49} & \textbf{34.55} & 34.03 \\ \midrule
        Average & 33.66 & \textbf{35.78} & \textbf{35.83} & 35.36\\ \bottomrule 
    \end{tabular}
    \label{tab:sisr}
\end{table}

\textbf{Other image restoration tasks.} Denoising and inpainting are other local image restoration tasks where our framework can be applied to. In the case of denoising, the problem is less structured as the operator $B$ is the identity, so our method highly relies on the performance of the learned regularizers. We perform below standard methods such as  \cite{elad06image, dabov07image, zoran11learning} in classical benchmarks (see \cref{app:experiments}). We think the main bottleneck is the data-driven regularizer: it is key to design kernels that are robust to large noise such as in \cite{takeda07kernel, chatterjee12patch}, and KRR may be needed instead of NW as the learning task is more important than the fitting term. We leave the task of fine-tuning our method for the specific problem of denoising for future work.

\section{Conclusion}

We have presented a new general framework for image restoration using localized structured prediction and non-linear multi-task learning that comes with statistical guarantees.
We have implemented two settings using the Euclidean distance or its square as a loss, and demonstrated the effectiveness of our algorithms on several problems, achieving results comparable or better than existing variational methods.
As a final note, we want to stress again the fact that our aim in this paper is {\em not} to obtain a new state of the art for the various image restoration tasks we have experimented with. Indeed, we are quite aware that some highly-tuned CNN architectures gave significantly higher performance for image denoising (e.g.,  \cite{zhang17beyond}), non-blind deblurring (e.g., \cite{zhang17fullyconvolutional}), and upsampling (e.g., \cite{dong16srcnn}). Our objective is instead to demonstrate that recent methods from the fields of structured prediction and non-linear multi-task learning with strong theoretical guarantees can effectively be adapted to image restoration problems, which we hope to have demonstrated with our experiments and comparisons with classical variational methods.

\section{Broader impact}
The main application of this paper is image restoration, with well known
benefits (from personal photography and medical imaging to astronomy and
microscopy) and risks (military and surveillance uses, and privacy issues related to deblurring images that were blurred to anonymize).
These concerns are ubiquitous in machine learning in general and its applications to computer vision and image processing in particular, and beyond the scope of this presentation.

\paragraph{Aknowledgments.}
This work was supported in part by the Inria/NYU collaboration and the Louis Vuitton/ENS chair on artificial intelligence. In addition, this work was funded in part by the French government under management of Agence Nationale de la Recherche as part of the ``Investissements d’avenir” program, reference ANR19-P3IA-0001 (PRAIRIE 3IA Institute). ANV received support from La Caixa Fellowship.

\bibliography{example_paper}
\bibliographystyle{abbrv}

\newpage
\appendix

{\Huge{Organization of the Appendix}}

\begin{itemize}
    \item [\large{\textbf{A.}}] \textbf{\large{Theoretical Analysis of the Estimator}} \vspace{0.2cm} \\
    Analysis of \cref{sec:stats}.
    \begin{itemize}
        \item [\textbf{A.1.}] {\bfseries Derivation of the Estimator} 
        \item [\textbf{A.2.}] {\bfseries Comparison Inequality} 
        \item [\textbf{A.3.}] {\bfseries Generalization Bound} 
        \item [\textbf{A.4.}] {\bfseries Assumptions on the Loss $\boldsymbol{\ell}$ and Target Distribution}
    \end{itemize}
    \vspace{0.1cm} 
    \item [\large{\textbf{B.}}] \textbf{\large{Algorithm}} \vspace{0.2cm} \\
    Details of \cref{sec:algorithm}.
    \begin{itemize}
        \item [\textbf{B.1.}] {\bfseries Convexity of the Energy} 
        \item [\textbf{B.2.}] {\bfseries General Algorithms to Solve the $\boldsymbol{z_p}$ Update} 
        \item [\textbf{B.3.}] {\bfseries SDCA for Euclidean Loss and Nadaraya-Watson Estimator} 
    \end{itemize}
    \vspace{0.1cm} 
    \item [\large{\textbf{C.}}] \textbf{\large{Experiments}}
    \vspace{0.2cm} \\
    Details of \cref{sec:experiments}.
    \begin{itemize}
        \item [\textbf{C.1.}] {\bfseries Analysis of the Constant $\boldsymbol{q}$} 
        \item [\textbf{C.2.}] {\bfseries Implementation Details}
        \item [\textbf{C.3.}] {\bfseries Further Experiments}
    \end{itemize}
    \item [\large{\textbf{D.}}] \textbf{\large{Additional Images}}
    \vspace{0.2cm} \\
    More visualizations.
    \vspace{0.1cm} 
\end{itemize}

\section{Theoretical Analysis of the Estimator} \label{app:theoreticalanalysis}


\subsection{Derivation of the Estimator} \label{app:derivationestimator}
Recall the goal is to estimate a function $\xfun:\calY\xrightarrow{}\calX$ that minimizes the expected risk
\begin{equation}\label{eq:riskproblem}
   \calE(\xfun) = 
  \Expect_{(y, x)\sim\rho}~ L(\xfun(y),x),
\end{equation}
under the constraint $\|y- B \xfun(y)\|_2^2\leq\sigma^2$, where the loss function $L$ decomposes by patches as
\begin{equation}
    L(x,x') =  \frac{1}{|P|}\sum_{p\in P}\Lp(x_p,x_p').
\end{equation}
In this subsection, we derive the proposed estimator of \cref{eq:finalestimator} from the risk minimization problem~\eqref{eq:expectedrisk}.
More specifically, in \cref{eq:characterizationoptimum} we compute the form of the exact minimizer $\xfun_{\sigma}^\star$, in \cref{app:decompositionloss}, we describe the assumption on the loss function $\Lp$ needed for the analysis, and in \cref{app:quadraticsurrogate} we use this decomposition to construct a quadratic surrogate, which provides the resulting estimator when minimized using kernel ridge regression.

\subsubsection{Characterization of the optimum}\label{eq:characterizationoptimum} 

\begin{lemma}\label{lemma:regressionfunction}
The minimizer $\xfun_{\sigma}^\star$ of the risk \eqref{eq:riskproblem} under the constraint $\|y- B \xfunhat(y)\|_2^2\leq\sigma^2$ takes the form
\begin{equation}\label{eq:regressionfunction}
   \xfun_{\sigma}^\star(y) \in  \argmin_{\|y-Bx\|_2^2\leq\sigma^2} \Omega_{L}^{\pi}(x~|~y), \quad \text{where} \quad \Omega_L^{\pi}(x~|~y) = \Expect_{x'\sim\pi_y} L(x,x'),
\end{equation}
where $\pi_y(x) = \rho(x~|~y)$ denotes the conditional distribution of $x$ given $y$.
\end{lemma}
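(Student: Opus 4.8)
The plan is to use the tower property of conditional expectation to rewrite the risk as an average over $y$ of the conditional objective $\Omega_L^\pi(\xfun(y)~|~y)$, then observe that both this objective and the constraint $\|y-B\xfun(y)\|_2^2\leq\sigma^2$ decouple across $y$, so that minimization over functions $\xfun:\calY\to\calX$ reduces to a pointwise minimization for each fixed $y$.

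First I would write, for any admissible $\xfun$,
\begin{equation}
\calE(\xfun)=\Expect_{(y,x)\sim\rho}L(\xfun(y),x)=\Expect_{y\sim\rho_\calY}\Expect_{x\sim\pi_y}L(\xfun(y),x)=\Expect_{y\sim\rho_\calY}\Omega_L^\pi(\xfun(y)~|~y),
\end{equation}
using that $\pi_y(x)=\rho(x~|~y)$ is by definition the conditional distribution and that $\xfun(y)$ is deterministic given $y$. Next, since the constraint $\|y-B\xfun(y)\|_2^2\leq\sigma^2$ is imposed separately for each $y$, the admissible functions are exactly those with $\xfun(y)\in\calC_{B,\sigma}(y)=\{x\in\calX~|~\|Bx-y\|_2\leq\sigma\}$ for all $y$. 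Hence for any admissible $\xfun$ and $\rho_\calY$-almost every $y$ we have $\Omega_L^\pi(\xfun(y)~|~y)\geq\inf_{x\in\calC_{B,\sigma}(y)}\Omega_L^\pi(x~|~y)$, and integrating gives $\calE(\xfun)\geq\Expect_{y\sim\rho_\calY}\inf_{x\in\calC_{B,\sigma}(y)}\Omega_L^\pi(x~|~y)$. Conversely, any $\xfun_\sigma^\star$ with $\xfun_\sigma^\star(y)\in\argmin_{x\in\calC_{B,\sigma}(y)}\Omega_L^\pi(x~|~y)$ attains this lower bound, so it is a minimizer, which is precisely \eqref{eq:regressionfunction}.

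The main obstacle is measurability of the pointwise minimizer: one must check that $y\mapsto\xfun_\sigma^\star(y)$ selecting a minimizer of $\Omega_L^\pi(\cdot~|~y)$ over $\calC_{B,\sigma}(y)$ can be chosen so as to be a legitimate (measurable) competitor in \eqref{eq:riskproblem}. I would dispatch this with a measurable selection argument: under the standing assumptions $\calX$ is compact, the map $x\mapsto\Omega_L^\pi(x~|~y)$ is continuous (an average of the continuous $\ell$ against $\pi_y$), and the correspondence $y\mapsto\calC_{B,\sigma}(y)$ has closed nonempty values and is measurable, so the argmin correspondence admits a measurable selection (e.g.\ Kuratowski--Ryll-Nardzewski). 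Existence of a minimizer for each fixed $y$ then follows from compactness of $\calC_{B,\sigma}(y)$ together with lower semicontinuity of the objective; the remaining steps are routine.
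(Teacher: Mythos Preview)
Your proposal is correct and follows essentially the same approach as the paper: apply the tower property to write $\calE(\xfun)=\Expect_{y\sim\rho_\calY}\Omega_L^\pi(\xfun(y)\,|\,y)$, observe that both the integrand and the constraint decouple over $y$, and conclude that the functional minimization reduces to a pointwise one. The paper's own proof is in fact terser---it simply writes the decoupling and defers the measure-theoretic issues (existence and measurability of the pointwise argmin) to Lemma~6 of \cite{ciliberto2016consistent}---whereas you spell out the lower-bound/attainment argument and sketch the measurable selection via Kuratowski--Ryll-Nardzewski explicitly; both arrive at the same result by the same route.
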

\begin{proof}
We first show that the problem can be solved independently for every $y\in\calY$.

\begin{equation}
   \xfun_{\sigma}^\star =
    \underset{\substack{\xfun:\calY\xrightarrow{}\calX \\ \|y-B \xfun(y)\|_2^2\leq\sigma^2}}{\operatorname{argmin}}\Expect_{(y, x)\sim\rho} L(\xfun(y),x)
    =\underset{\substack{\xfun:\calY\xrightarrow{}\calX \\ \|y-B\xfun(y)\|_2^2\leq\sigma^2}}{\operatorname{argmin}}\Expect_{x\sim\rho_{\calX}} \Expect_{x\sim\pi_y} L(\xfun(y),x),
\end{equation}

where $\rho_{\calX}$ denotes the marginal distribution in $\calX$.
Hence, the problem decouples in $y$ and one can write 
\begin{equation*}
    \xfun_{\sigma}^\star(y) = \argmin_{\substack{x\in\calX \\ \|y-Bx\|_2^2\leq\sigma^2}}~\Expect_{x'\sim\pi_y} L(x,x').
\end{equation*}
See Lemma~6 of \cite{ciliberto2016consistent} for the full proof (taking into account also measure theoretic aspects).
\end{proof}

For the rest of the Appendix, we assume that \emph{between-locality} (introduced in \cref{sec:localpriors}) holds. We recall here the assumption:
\begin{itemize}
    \item The sharp patch $x_p$ is conditionally independent from $y$ given $y_p$.
    \item The following holds for all patches $p,p'\in P$,
    \begin{equation}\label{eq:between-locality-stationarity}
       \pi_{y_p}(x_p) = \pi_{y_{p^\prime}}(x_{p^\prime})
    \end{equation}
\end{itemize}
In particular, this allows us to write 
\begin{equation}
    \Omega_L^{\pi}(x~|~y) =  \frac{1}{|P|}\sum_{p \in P} \Omega_{\ell}^{\pi}(x_p~|~ y_p),\quad \textrm{where}, \quad \Omega_{\ell}^{\pi}(\xp~|~\yp) = \Expect_{\xp'\sim\pi_{\yp}}\ell(\xp,\xp').
\end{equation}
\subsubsection{Decomposition of the loss}\label{app:decompositionloss}
In order to derive the estimator and to proceed further with the theoretical analysis, we need to introduce a mild assumption on the loss $\Lp$ at the patch level. 

\textbf{SELF assumption of the loss $\boldsymbol{\ell}$.} There exists a separable Hilbert space $\calH$ and continuous bounded maps $\psi, \phi:[\calX]\xrightarrow{}\calH$ such that the loss function $\Lp$ between sharp image patches decomposes as:
\begin{equation}\label{eq:SELFapp}
   \Lp(\xp,\xp') = \langle\psi(\xp), \phi(\xp')\rangle_{\calH},
\end{equation}
for all $\xp,\xp'\in[\calX]$.

In particular, it was shown by \cite{ciliberto2016consistent} (see Thm. 19) that the above assumption holds if the loss function $\Lp$ is absolutely continuous and $[\calX]$ is a compact space, which is our case. Moreover, note that the assumption is always satisfied when the spaces are discrete, as \cref{eq:SELFapp} corresponds to a low-rank decomposition of the loss matrices. In \cref{app:assumptionslosstarget} we study the SELF assumption more explicitly under the hypothesis of the loss from our main \cref{th:generalizationbound}.

\subsubsection{Construction of the quadratic surrogate}\label{app:quadraticsurrogate}

\paragraph{Notation. }
In this subsection we are going to use a construction widely used in the context of quadratic surrogate approaches for structured prediction, with the goal of deriving the estimator in \cref{eq:finalestimator} (for more details on this kind of construction see \cite{ciliberto20general}).
The assumption \eqref{eq:SELFapp} on the loss functions allows us to write the exact minimizer $\xfun_{\sigma}^\star$ with respect to the functions $\gstar:[\calY]\xrightarrow{}\calH$ defined as the conditional expectation of the embedding $\phi$:
\begin{equation}
    \gstar(\yp) = \Expect_{\xp\sim\pi_{\yp}}\phi(\xp).
\end{equation}
This can be seen by simply moving the conditional expectation inside the scalar product as:
\begin{align*}
    \xfun_{\sigma}^\star(y)  &\in  \argmin_{\substack{x\in\calX \\ \|y-B x\|_2^2\leq\sigma^2}}
  ~\Omega_L^{\pi}(x~|~y)\\
    &=  \argmin_{\substack{x\in\calX \\ \|y-B x\|_2^2\leq\sigma^2}}
  ~ \frac{1}{|P|}\sum_{p\in P}\Omega_{\ell}^{\pi}(x_p~|~y_p)\\
  &= \argmin_{\substack{x\in\calX \\ \|y-B x\|_2^2\leq\sigma^2}}
  ~ \frac{1}{|P|}\sum_{p\in P}\Expect_{x_p'\sim\pi_{y_p}}\langle\psi(x_p), \phi(x_p')\rangle_{\calH} \\
  &= \argmin_{\substack{x\in\calX \\ \|y-B x\|_2^2\leq\sigma^2}} 
  ~ \frac{1}{|P|}\sum_{p\in P}\langle\psi(x_p), \gstar(y_p)\rangle_{\calH} \\
\end{align*}
A natural strategy for building an approximator $\xfunhat$ of $\xfun_{\sigma}^\star$ is to design an estimator $\ghat$ of $\gstar$ and consider:

\begin{equation}\label{eq:estimatorconstrained}
    \xfunhat(y) \in \argmin_{\|y-B x\|_2^2\leq\sigma^2}
  ~ \frac{1}{|P|}\sum_{p\in P}\langle\psi(x_p), \ghat(y_p)\rangle_{\calH}.
\end{equation}
The question now boils down to constructing the estimator $\ghat$. The important observation is that as $\gstar$ is written as a conditional expectation, it is characterized as the minimum of the expected squared error measured in the Hilbert space $\calH$:

\begin{equation}\label{eq:expectedquadraticrisk}
    \gstar = \argmin_{\gfun:[\calY]\xrightarrow{}\calH}~\Expect_{(\yp,\xp)\sim\rho}\|\phi(\xp) - g(\yp)\|_{\calH}^2,
\end{equation}
where now $\rho$ denotes the distribution over patches instead of full images.
Given a dataset of patches~$(\yp^{(i)}, \xp^{(i)})_{1\leq i\leq m}$, we can approximate the minimizers of \cref{eq:expectedquadraticrisk} with kernel ridge regression as:
\begin{equation}\label{eq:krr}
    \ghat = \argmin_{\gfun\in\calG\otimes\calH}\frac{1}{m}\sum_{j=1}^m\|\phi(\xp^{(i)}) - g(\yp^{(i)})\|_{\calH}^2 + \lambda\|\gfun\|_{\calG\otimes\calH}^2,
\end{equation}
where $\calG$ is the scalar reproducing kernel Hilbert space (RKHS) associated to the positive definite kernel $k:[\calY]\times[\calY]\xrightarrow{}\Rspace{}$,  $\calG\otimes\calH$ denotes the vector-valued RKHS corresponding to the tensor product between $\calG$ and $\calH$, and $\lambda>0$ is a regularization parameter. The solution of \cref{eq:krr} can be computed in closed form as:
\begin{equation}\label{eq:krrsolution}
    \ghat(\yp) = \sum_{i=1}^m\alpha_{i}(\yp)\phi(\xp^{(i)}),
\end{equation}
where $\alpha(\yp)$ is defined as:
\begin{equation}\label{eq:weights_}
\alpha(\yp) = (K + m\lambda I)^{-1}v(\yp)\in \R^{m},
\end{equation}
where $v(\yp)$ is a vector in $\R^m$ with $i$-th component $k(\yp,\yp^{(i)})$ and
$K$ is a matrix in $\R^{m\times m}$ defined by $K(i,j)~=~k(\yp^{(i)}, \yp^{(j)})$.

Using the linearity of the scalar product and the fact that $\ghat$ is linear in the embeddings $\phi$ evaluated at training data points of sharp image patches, we obtain that the estimators $\widehat{\Omega}_{\ell}^{\pi}$ are independent of the embeddings \eqref{eq:SELFapp} of the loss function $\Lp$:
\begin{equation}
    \widehat{\Omega}_{\ell}^{\pi}(x_p~|~y_p) = \langle\psi(x_p), \widehat{g}(y_p)\rangle_{\calH} 
 = \sum_{i=1}^m\alpha_{i}(y_p)\langle\psi(x_p), \phi(\yp^{(i)})\rangle_{\calH} = \sum_{i=1}^m\alpha_{i}(y_p)\ell(x_p, \xp^{(i)}).
\end{equation}


\subsection{Comparison Inequality} \label{app:comparisoninequality}
In \cref{app:derivationestimator}, we have derived $\xfunhat$ by estimating a vector-valued function $\gstar$ taking values in a Hilbert space $\calH$ defined by the decomposition of the loss functions in \cref{eq:SELFapp}. The goal of this section, is to
analyze how the error of estimating $\gstar$ by $\ghat$ translates to the excess risk $\calE(\xfunhat) - \calE(\xfun_{\sigma}^\star)$, which is the quantity that we ultimately want to bound. This quantification is made precise by the following~\cref{th:comparisoninequality}, which is analogous to Thm. 9 by \cite{ciliberto19localized} but with a more careful analysis of the constants in order to make appear the effect of the constraint $\|y-B\xfun(y)\|_2^2\leq\sigma^2$.

\paragraph{Notation.} Let's first define $G(y) = g(y_p)_{p\in P}$ and  $\Psi(x) = \psi(x_p)_{p\in P}$, so that
\begin{equation}
   \langle\Psi(x), G(y)\rangle_{\calH_P} =  \frac{1}{|P|}\sum_{p\in P}\langle\psi(x_p), g(y_p)\rangle_{\calH},
\end{equation}
where $\calH_P = \bigoplus_{p\in P}\calH$ is the direct sum of $|P|$ copies of $\calH$.
t the last step, we used the fact that $\|y- B\widehat x(y)\|_2^2\leq\sigma^2$ and

\begin{theorem}[Comparison inequality]\label{th:comparisoninequality}
Assume that $\ell$ satisfies \eqref{eq:SELFapp}. Let $\xfunhat$ and $\xfun_{\sigma}^\star$ be defined in \cref{eq:estimatorconstrained} and \cref{eq:regressionfunction}, respectively. Then, 
\begin{equation}\label{eq:comparisoninequality}
    \calE(\xfunhat) - \calE(\xfun_{\sigma}^\star) \leq 2c_{B, \sigma}^{1/2}\big(\Expect_{y\sim\rho_{\calY}}\|\Ghat(y) - \Gstar(y)\|_{\calH_P}^2\big)^{1/2},
\end{equation}
where $\rho_{\calY}$ denotes the marginal distribution in $\calY$ and
\begin{equation}\label{eq:nonlinearmtlconstant}
        c_{B, \sigma} = \Expect_{y\sim\rho_{\calY}}\operatorname{diam}^2(\Psi(\calC_{B,\sigma}(y))),
\end{equation}
with $\calC_{B,\sigma}(y) = \{x\in\calX~|~\|Bx-y\|_2\leq\sigma\}$.
\end{theorem}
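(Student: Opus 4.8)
The plan is to bound the excess risk $\calE(\xfunhat) - \calE(\xfun_\sigma^\star)$ by rewriting both terms using the variational characterization established in \cref{lemma:regressionfunction} and the SELF decomposition \eqref{eq:SELFapp}. First I would use the fact that, by \cref{lemma:regressionfunction} and the derivation in \cref{app:quadraticsurrogate}, both $\xfunhat(y)$ and $\xfun_\sigma^\star(y)$ minimize, over the constraint set $\calC_{B,\sigma}(y)$, the linear functional $x \mapsto \langle \Psi(x), G(y)\rangle_{\calH_P}$ with $G$ replaced by $\Ghat$ and $\Gstar$ respectively. Then, for a fixed $y$, I would control the pointwise gap $\ell$-style: writing $\xhat = \xfunhat(y)$ and $\xstar = \xfun_\sigma^\star(y)$, the quantity $\Omega_L^\pi(\xhat\mid y) - \Omega_L^\pi(\xstar\mid y) = \langle \Psi(\xhat) - \Psi(\xstar), \Gstar(y)\rangle_{\calH_P}$ can be split by adding and subtracting $\langle \Psi(\xhat) - \Psi(\xstar), \Ghat(y)\rangle_{\calH_P}$. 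The first resulting term is $\langle \Psi(\xhat) - \Psi(\xstar), \Gstar(y) - \Ghat(y)\rangle_{\calH_P}$ and the second, $\langle \Psi(\xhat) - \Psi(\xstar), \Ghat(y)\rangle_{\calH_P}$, is $\leq 0$ by optimality of $\xhat$ for the $\Ghat$-problem (since $\xstar$ is feasible, $\langle \Psi(\xhat), \Ghat(y)\rangle \leq \langle \Psi(\xstar), \Ghat(y)\rangle$). Hence the pointwise gap is at most $\langle \Psi(\xhat) - \Psi(\xstar), \Gstar(y) - \Ghat(y)\rangle_{\calH_P}$.

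Next I would apply Cauchy--Schwarz to this inner product: it is bounded by $\|\Psi(\xhat) - \Psi(\xstar)\|_{\calH_P} \cdot \|\Gstar(y) - \Ghat(y)\|_{\calH_P}$. The key observation is that both $\xhat$ and $\xstar$ lie in $\calC_{B,\sigma}(y)$ — $\xstar$ by definition of the constrained minimizer and $\xhat$ because the estimator \eqref{eq:estimatorconstrained} is defined with the constraint $\|y - Bx\|_2^2 \leq \sigma^2$ — so $\Psi(\xhat), \Psi(\xstar) \in \Psi(\calC_{B,\sigma}(y))$ and therefore $\|\Psi(\xhat) - \Psi(\xstar)\|_{\calH_P} \leq \operatorname{diam}(\Psi(\calC_{B,\sigma}(y)))$. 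This is precisely where the constraint set enters and gives the improvement over the unconstrained analysis of \cite{ciliberto19localized}. Putting this together, $\calE(\xfunhat) - \calE(\xfun_\sigma^\star) = \Expect_{y\sim\rho_\calY}\big[\Omega_L^\pi(\xhat\mid y) - \Omega_L^\pi(\xstar\mid y)\big] \leq \Expect_{y\sim\rho_\calY}\big[\operatorname{diam}(\Psi(\calC_{B,\sigma}(y))) \cdot \|\Gstar(y) - \Ghat(y)\|_{\calH_P}\big]$, and a final application of Cauchy--Schwarz (for the expectation, i.e., Jensen with the product) yields the bound $\leq \big(\Expect_y \operatorname{diam}^2(\Psi(\calC_{B,\sigma}(y)))\big)^{1/2} \big(\Expect_y \|\Gstar(y) - \Ghat(y)\|_{\calH_P}^2\big)^{1/2}$, which is exactly \eqref{eq:comparisoninequality} once we identify $c_{B,\sigma}$ with the first factor squared. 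The extra factor of $2$ comes from being slightly loose in handling the near-minimality (the $\argmin$ may only be attained approximately, so one carries an $\eps$ and lets it vanish, or from symmetrizing the argument).

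The main obstacle I anticipate is the measure-theoretic bookkeeping: ensuring that $\xfunhat$ and $\xfun_\sigma^\star$ can be chosen as measurable selections of the respective $\argmin$ maps so that the expectations over $y$ are well-defined, and that the pointwise inequality integrates. The excerpt already flags this (it cites Lemma~6 of \cite{ciliberto2016consistent} for the measurable-selection issue in \cref{lemma:regressionfunction}), so I would invoke the same machinery here. A secondary technical point is verifying that $\Psi(\calC_{B,\sigma}(y))$ has finite diameter — this follows from boundedness of the maps $\psi, \phi$ postulated in the SELF assumption, which ensures $\|\Psi(x)\|_{\calH_P}$ is uniformly bounded over $\calX$, hence $\operatorname{diam}(\Psi(\calC_{B,\sigma}(y))) \leq 2\sup_{x\in[\calX]}\|\psi(x)\|_\calH < \infty$. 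Everything else is a routine chain of Cauchy--Schwarz steps; the conceptual content is entirely in the decomposition of the pointwise gap into a "slack" term killed by optimality of $\xhat$ and a "fitting error" term controlled by $\|\Gstar - \Ghat\|$, together with the realization that the constraint confines both minimizers to $\calC_{B,\sigma}(y)$.
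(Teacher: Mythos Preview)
Your proposal is correct and follows essentially the same route as the paper: write the pointwise excess risk as $\langle \Psi(\xhat)-\Psi(\xstar), \Gstar(y)\rangle$, add and subtract $\Ghat(y)$, drop the term $\langle \Psi(\xhat)-\Psi(\xstar), \Ghat(y)\rangle\leq 0$ by optimality of $\xhat$, apply Cauchy--Schwarz, bound $\|\Psi(\xhat)-\Psi(\xstar)\|_{\calH_P}$ by the diameter of $\Psi(\calC_{B,\sigma}(y))$ since both points lie in the constraint set, and finish with Cauchy--Schwarz over the expectation in $y$. The paper inserts an auxiliary $\min_{v\in\calH_P}$ step (which is vacuous since the $v$-terms cancel) before applying Cauchy--Schwarz to each summand separately and then using $\min_v\|u-v\|+\|u'-v\|=\|u-u'\|$; your direct Cauchy--Schwarz on the single inner product is cleaner and yields the identical bound. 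Regarding the factor $2$: the paper's own chain of inequalities in fact terminates at $c_{B,\sigma}^{1/2}\big(\Expect_y\|\Gstar(y)-\Ghat(y)\|^2\big)^{1/2}$ without any factor $2$, so the constant in the stated inequality is simply slack and your speculation about its origin is unnecessary.
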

\begin{proof}
We have that
\begin{align*}
    \calE(\xfunhat) & - \calE(\xfun_{\sigma}^\star) \leq \Expect_{y\sim\rho_{\calY}} \langle\Psi(\xfunhat(y)), G^\star(y)\rangle - 
    \langle\Psi(\xfun_{\sigma}^\star(y)), G^\star(y)\rangle \\
    &= \Expect_{y\sim\rho_{\calY}} \langle\Psi(\xfunhat(y)), G^\star(y)-\widehat{G}(y)\rangle +  \underbrace{\langle\Psi(\xfunhat(y)) - \Psi(\xfun_{\sigma}^\star(y),\widehat{G}(y)\rangle}_{\leq 0} \\ 
    &+ \Expect_{y\sim\rho_{\calY}} \langle\Psi(\xfun_{\sigma}^\star(y)),  \widehat{G}(y) - G^\star(y)\rangle \\
    &= \Expect_{y\sim\rho_{\calY}}\langle\Psi(\xfunhat(y)), G^\star(y) -\widehat{G}(y)\rangle + \langle\Psi(\xfun_{\sigma}^\star(y)),  \widehat{G}(y) - G^\star(y)\rangle  \\
    &= \Expect_{y\sim\rho_{\calY}}\min_{v\in\calH_P}\big(\langle\Psi(\xfunhat(y))- v, G^\star(y) -\widehat{G}(y)\rangle + \langle\Psi(\xfun_{\sigma}^\star(y)) - v,  \widehat{G}(y) - G^\star(y)\rangle\big)\\
&\leq \Expect_{y\sim\rho_{\calY}}\sup_{x,x'\in\calC_{B,\sigma}(y)}\min_{v\in\calH_P}\big(\langle\Psi(\xfunhat(y))- v, G^\star(y) -\widehat{G}(y)\rangle + \langle\Psi(\xfun_{\sigma}^\star(y)) - v,  \widehat{G}(y) - G^\star(y)\rangle\big)\\
&\leq \Expect_{y\sim\rho_{\calY}}\sup_{x,x'\in\calC_{B,\sigma}(y)}\min_{v\in\calH_P} (\|\Psi(x)- v\|_{\calH_P} + \|\Psi(x')- v\|_{\calH_P}) \|G^\star(y) -\widehat{G}(y)\|\\
&\leq \Expect_{y\sim\rho_{\calY}}\sup_{x,x'\in\calC_{B,\sigma}(y)}\|\Psi(x)- \Psi(x')\|_{\calH_P}\|G^\star(y) -\widehat{G}(y)\|_{\calH_P}\\
&= \Expect_{y\sim\rho_{\calY}}\operatorname{diam}(\Psi(\calC_{B,\sigma}(y)))\|G^\star(y) -\widehat{G}(y)\|_{\calH_P}\\
&\leq \sqrt{\Expect_{y\sim\rho_{\calY}}\operatorname{diam}^2(\Psi(\calC_{B,\sigma}(y))) ~\times ~\Expect_{y\sim\rho_{\calY}}\|G^\star(y) -\widehat{G}(y)\|^2},
\end{align*}
where we have used that $\min_{v\in\calH}~\|u-v\|_{\calH}+\|u'-v\|_{\calH} = \|u-u'\|_{\calH}$ if $\calH$ is a Hilbert space.
\end{proof}

\begin{theorem}[Constant $c_{B,\sigma}$] Under the hypothesis on $\ell$ from \cref{th:generalizationbound}, the constant $c_{B,\sigma}$ is bounded as
\begin{equation}
    c_{B,\sigma} \leq \operatorname{diam}([\calX])\frac{\Expect_{y\sim\rho_{\calY}}\operatorname{diam}(\calC_{B,\sigma}(y))}{\operatorname{diam}(\calX)}.
\end{equation}
\end{theorem}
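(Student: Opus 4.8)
The plan is to strip off the expectation and reduce to a deterministic estimate for each $y$. Recall from \cref{th:comparisoninequality} that $c_{B,\sigma}=\Expect_{y\sim\rho_\calY}\operatorname{diam}^2(\Psi(\calC_{B,\sigma}(y)))$, where $\Psi(x)=(\psi(x_p))_{p\in P}$ and $\|\Psi(x)-\Psi(x')\|_{\calH_P}^2=\frac{1}{|P|}\sum_{p\in P}\|\psi(x_p)-\psi(x_p')\|_\calH^2$, with $\psi$ the feature map of the SELF decomposition $\ell(\xp,\xp')=\langle\psi(\xp),\phi(\xp')\rangle_\calH$. By monotonicity of the expectation it suffices to prove, for each fixed $y$,
\begin{equation}
\operatorname{diam}^2(\Psi(\calC_{B,\sigma}(y)))\ \leq\ \operatorname{diam}([\calX])\,\frac{\operatorname{diam}(\calC_{B,\sigma}(y))}{\operatorname{diam}(\calX)},
\end{equation}
and then average over $y$. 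The point is to factor $\operatorname{diam}^2$ as $\operatorname{diam}\cdot\operatorname{diam}$ and bound one factor by the size of the patch embedding (which brings in $\operatorname{diam}([\calX])$) and the other by how strongly $\Psi$ contracts distances from $\calX$ into $\calH_P$ (which brings in $\operatorname{diam}(\calC_{B,\sigma}(y))/\operatorname{diam}(\calX)$).

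The hypothesis on $\ell$ from \cref{th:generalizationbound} enters only through the regularity of $\psi$. First I would record the structural fact it provides: on the compact cube $[\calX]$ the SELF embedding can be chosen \emph{metric-compatible}, $\|\psi(\xp)-\psi(\xp')\|_\calH^2\leq c_1\|\xp-\xp'\|_2$ for a loss-dependent constant $c_1$. For the Euclidean loss $\ell(\xp,\xp')=\|\xp-\xp'\|_2$ this follows because the Euclidean metric is conditionally negative definite, so its square root embeds isometrically into a Hilbert space, yielding such a $\psi$; for the squared loss the elementary polynomial map $\psi(\xp)=(\|\xp\|_2^2,-2\xp,1)$ is Lipschitz on $[\calX]$, so $\|\psi(\xp)-\psi(\xp')\|_\calH^2\leq c_0\|\xp-\xp'\|_2^2\leq c_0\operatorname{diam}([\calX])\|\xp-\xp'\|_2$, which is again of the required form. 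That such an embedding exists is precisely what the Sobolev-smoothness assumption $s\geq(d_\calX^2+1)/2$ buys.

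Granting this, fix $y$ and take $x,x'\in\calC_{B,\sigma}(y)$. Then $\|\Psi(x)-\Psi(x')\|_{\calH_P}^2\leq\frac{c_1}{|P|}\sum_{p\in P}\|x_p-x_p'\|_2$. Cauchy--Schwarz gives $\sum_{p}\|x_p-x_p'\|_2\leq\sqrt{|P|}\,(\sum_{p}\|x_p-x_p'\|_2^2)^{1/2}$, and the overlap-counting identity $\sum_{p}\|x_p-x_p'\|_2^2=\sum_{j}\kappa_j(x_j-x_j')^2\leq\kappa\,\|x-x'\|_2^2$ — where $\kappa_j$ counts the patches containing pixel $j$ and $\kappa=\max_j\kappa_j\leq d_\calX^2$ — then yields $\|\Psi(x)-\Psi(x')\|_{\calH_P}^2\leq c_1\sqrt{\kappa/|P|}\,\|x-x'\|_2\leq c_1\sqrt{\kappa/|P|}\,\operatorname{diam}(\calC_{B,\sigma}(y))$. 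Taking the supremum over $x,x'$ gives $\operatorname{diam}^2(\Psi(\calC_{B,\sigma}(y)))\leq c_1\sqrt{\kappa/|P|}\,\operatorname{diam}(\calC_{B,\sigma}(y))$. The last step is purely geometric: for $[\calX]=[-1,1]^{d_\calX\times d_\calX}$ and $\calX=[-1,1]^{D\times D}$ one has $\operatorname{diam}([\calX])=2d_\calX$, $\operatorname{diam}(\calX)=2D$ and $|P|=(D-d_\calX+1)^2$, so $\sqrt{\kappa/|P|}$ equals $d_\calX/D=\operatorname{diam}([\calX])/\operatorname{diam}(\calX)$ up to a constant factor; absorbing $c_1$ and the geometry constants (equivalently, into the constant $C$ of \cref{th:generalizationbound}) yields the displayed pointwise bound. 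Averaging over $y\sim\rho_\calY$ completes the proof.

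The main obstacle is exactly this last collapse: making the product of the metric-compatibility constant $c_1$, the overlap count $\kappa$, and the factor $|P|^{-1/2}$ reduce cleanly to the ratio $\operatorname{diam}([\calX])/\operatorname{diam}(\calX)$. This forces one to commit to an explicit SELF embedding — the conditionally-negative-definite square-root embedding for the Euclidean loss, the polynomial embedding for the squared loss — and to track the pixelwise normalization of $\ell$, since a sum-over-pixels versus average-over-pixels convention rescales $c_1$ by a power of $d_\calX$; up to such constants the estimate matches the stated one. The remaining ingredients — the reduction to a pointwise estimate, the $\operatorname{diam}^2=\operatorname{diam}\cdot\operatorname{diam}$ split, the Cauchy--Schwarz step, and the overlap-counting identity — are routine.
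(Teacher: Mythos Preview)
Your skeleton is exactly the paper's: reduce to a pointwise bound for each $y$, use a Hölder-$\tfrac12$ type estimate $\|\psi(\xp)-\psi(\xp')\|_\calH^2\lesssim\|\xp-\xp'\|_2$ on the patch embedding, apply Cauchy--Schwarz to pass from $\frac{1}{|P|}\sum_p\|x_p-x_p'\|_2$ to $(\frac{1}{|P|}\sum_p\|x_p-x_p'\|_2^2)^{1/2}$, invoke the overlap-counting identity, and finish with the geometric relation between $\operatorname{diam}([\calX])$, $\operatorname{diam}(\calX)$, $|P|$ and the overlap count.

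The one place where your argument diverges from the paper, and where it is weaker, is the justification of the metric-compatibility of $\psi$. You argue case-by-case (conditionally negative definite embedding for the Euclidean loss, polynomial map for the squared loss) and then assert that ``this is precisely what the Sobolev-smoothness assumption buys''. The paper instead makes this step uniform and exact: under the hypothesis on $\ell$ from \cref{th:generalizationbound} one may take $\calH=W^{(d_\calX^2+1)/2}_2([\calX])$ with $\psi$ the feature map of the Abel kernel, so that $\langle\psi(\xp),\psi(\xp')\rangle_\calH=e^{-\|\xp-\xp'\|_2}$ (this is \cref{lm:cond-on-ell}). Then $\|\psi(\xp)-\psi(\xp')\|_\calH^2=2(1-e^{-\|\xp-\xp'\|_2})\leq 2\|\xp-\xp'\|_2$, i.e.\ your $c_1$ is an explicit universal constant and no per-loss analysis is needed. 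This is the missing link between the Sobolev hypothesis and the embedding estimate that you leave unexplained.

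A second, smaller point: the paper treats the overlap count as a constant $C$ (the number of patches containing a given pixel) and uses the \emph{equality} $\sum_{p}\|x_p-x_p'\|_2^2=C\|x-x'\|_2^2$ together with the exact identity $\operatorname{diam}([\calX])^2=\tfrac{C}{|P|}\operatorname{diam}(\calX)^2$; this is what makes the final ratio come out on the nose rather than ``up to constants''. Your version with $\kappa=\max_j\kappa_j$ and an inequality is more robust to boundary effects but forces you to absorb constants at the end. Since the theorem as stated has no floating constant, you should either adopt the paper's uniform-overlap convention or accept that your bound carries an explicit geometric factor.
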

\begin{proof}
Let $c_{B,\sigma} = \Expect_{y\sim\rho_{\calY}}c_{B,\sigma}(y)$, where \begin{equation}
c_{B,\sigma}(y) = \operatorname{diam}^2(\Psi(\calC_{B,\sigma}(y))) = \sup_{x,x'\in\calC_{B,\sigma}(y)}\big\|\Psi(x)- \Psi(x')\big\|_{\calH_P}^2.    
\end{equation}
Using the hypothesis on $\ell$ from \cref{th:generalizationbound} and \cref{lm:cond-on-ell}, we can write $\langle\psi(x_p), \psi(x_p')\rangle_{\calH} = e^{-\|x_p-x_p'\|_2}$. Hence, the squared distance between embeddings $\psi$ takes the form $\|\psi(x_p) - \psi(x_p')\|_{\calH}^2= 1-e^{-\|x_p-x_p'\|_2}$. We have that~$c_{B,\sigma}^2 = \Expect_{y\sim\rho_{\calY}}c_{B,\sigma}(y)^2$, where 
\begin{align}
    c_{B,\sigma}(y) &= \sup_{x,x'\in\calC_{B,\sigma}(y)}\big\|\Psi(x)- \Psi(x')\big\|_{\calH_P}^2 \\
    &= \sup_{x,x'\in\calC_{B,\sigma}(y)} \frac{1}{|P|}\sum_{p\in P} \|\psi(x_p) - \psi(x_p')\|_{\calH}^2 \\
    &=\sup_{x,x'\in\calC_{B,\sigma}(y)} \frac{1}{|P|}\sum_{p\in P} 1-e^{-\|x_p-x_p'\|_2} \\
    &\leq \sup_{x,x'\in\calC_{B,\sigma}(y)} \frac{1}{|P|}\sum_{p\in P}\|x_p-x_p'\|_2 \\
    &\leq \Big(\sup_{x,x'\in\calC_{B,\sigma}(y)} \frac{1}{|P|}\sum_{p\in P} \|x_p-x_p'\|_2^2\Big)^{1/2} \\
    &= \frac{C^{1/2}}{|P|^{1/2}}\Big(\sup_{x,x'\in\calC_{B,\sigma}(y)}\|x-x'\|_2^2\Big)^{1/2} \\
    &= \frac{C^{1/2}}{|P|^{1/2}}\sup_{x,x'\in\calC_{B,\sigma}(y)}\|x-x'\|_2 \\
    &= \frac{C^{1/2}}{|P|^{1/2}}\operatorname{diam}(\calC_{B,\sigma}(y)),
\end{align}
where we have used that $\sum_{p\in P} \|x_p-x_p'\|_2^2 = C\|x-x'\|_2^2$, where $C$ is the number of patches a pixel belongs to.
Using that $\operatorname{diam}([\calX])^2 = \frac{C}{|P|}\operatorname{diam}(\calX)^2$, we obtain
\begin{equation}
    c_{B,\sigma}(y) \leq \operatorname{diam}([\calX])\frac{\operatorname{diam}(\calC_{B,\sigma}(y))}{\operatorname{diam}(\calX)}.
\end{equation}
Taking the expectation w.r.t $y$ in both sides we obtain the desired result.
\end{proof}

In the following \cref{prop:specificexamples}, we compute explicitly an upper bound of the constant $c_{B,\sigma}$ for the tasks of denoising, inpainting and downsampling.

\begin{proposition}[Bounding $c_{B,\sigma}$ for specific problems]\label{prop:specificexamples} The constant $c_{B,\sigma}$ can bounded explicitly for the following settings.
\begin{itemize}
    \item \textit{Denoising.} The degradation model is $y=x+\varepsilon$. Hence, $B=Id$ and $\sigma>0$. The constant takes the following form
    \begin{equation}
        c_{B,\sigma} \propto \min\left(\frac{\sigma}{\operatorname{diam}(\calX)}, 1\right)
    \end{equation}
    \item \textit{Inpainting.} The operator $B$ is a the identity matrix with a ratio of $s$ zeros in the diagonal. If we assume there is no noise, the constant takes the following form
    \begin{equation}
        c_{B,\sigma} \propto s^{1/2}.
    \end{equation}
    \item \textit{Downsampling. } The operator $B$ is the downsampling operator by a factor of $k$. If we assume there is no noise, the constant takes the following form
    \begin{equation}
        c_{B,\sigma} \propto k^{-1/2}.
    \end{equation}
\end{itemize}
\end{proposition}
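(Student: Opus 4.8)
The plan is to apply the previous theorem, which gives $c_{B,\sigma} \leq \operatorname{diam}([\calX])\, \Expect_{y\sim\rho_{\calY}}\operatorname{diam}(\calC_{B,\sigma}(y))/\operatorname{diam}(\calX)$, so the whole task reduces to estimating the expected diameter of the constraint set $\calC_{B,\sigma}(y) = \{x\in\calX : \|Bx-y\|_2\leq\sigma\}$ in each of the three cases, and then comparing it with $\operatorname{diam}(\calX)$. Since $\calX = [-1,1]^{N}$ (with $N$ the number of pixels) up to the normalization implicit in $[\calX]=[-1,1]^{d_\calX\times d_\calX}$, we have $\operatorname{diam}(\calX) = 2\sqrt{N}$, and $\operatorname{diam}([\calX])$ is a fixed constant; both are absorbed into the proportionality sign $\propto$. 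So I would phrase everything as: bound $\operatorname{diam}(\calC_{B,\sigma}(y))$ uniformly in $y$, take the expectation (trivial since the bound is $y$-independent in these cases), and divide by $\operatorname{diam}(\calX)=2\sqrt N$.

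\textbf{Denoising} ($B=\Id$, $\sigma>0$). Here $\calC_{B,\sigma}(y)$ is the intersection of the $\ell_2$-ball of radius $\sigma$ centered at $y$ with the cube $\calX$. Its diameter is at most $\min(2\sigma, \operatorname{diam}(\calX))$: the ball has diameter $2\sigma$, and the whole cube has diameter $2\sqrt N$. Dividing by $\operatorname{diam}(\calX) = 2\sqrt N$ gives $c_{B,\sigma}\propto \min(\sigma/\operatorname{diam}(\calX),1)$, as claimed. (One should note the intersection is nonempty for $y$ in the support of $\rho_\calY$, since $x$ with $\|x-y\|\le\sigma$ exists by the formation model.)

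\textbf{Inpainting} ($B$ = diagonal $0/1$ mask, fraction $s$ of zeros, $\sigma=0$). Now $\calC_{B,0}(y)$ fixes the $(1-s)N$ observed coordinates to match $y$ and leaves the $sN$ masked coordinates free in $[-1,1]$. Hence $\operatorname{diam}(\calC_{B,0}(y)) = 2\sqrt{sN}$: only the masked block contributes, and it is a cube of dimension $sN$ with side $2$. Dividing by $2\sqrt N$ yields $c_{B,\sigma}\propto \sqrt{s}$. This bound is again $y$-independent, so the expectation is immediate.

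\textbf{Downsampling} ($B$ = averaging/decimation by factor $k$ in each spatial dimension, $\sigma=0$, i.e. $k^2$ pixels per spatial axis collapse to one — or a $1$D factor $k$, depending on the paper's convention). Fixing $Bx=y$ imposes one linear constraint per output pixel; inside each block of $k$ (or $k^2$) input pixels summed/averaged to a common value, $x$ is confined to an affine slice of the cube. The key computation is: the diameter of $\{u\in[-1,1]^k : \tfrac1k\sum u_i = c\}$ is $O(1)$ per block independent of $k$ (roughly, one can change at most a bounded amount while respecting the mean constraint and the box), while the \emph{number} of such independent blocks is $N/k$. So $\operatorname{diam}(\calC_{B,0}(y))^2 = O(N/k)$, giving $\operatorname{diam}(\calC_{B,0}(y)) = O(\sqrt{N/k})$ and $c_{B,\sigma}\propto k^{-1/2}$ after dividing by $2\sqrt N$. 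I expect \textbf{this downsampling case to be the main obstacle}: one must carefully check that the per-block diameter of the box-intersected-with-a-hyperplane is $\Theta(1)$ uniformly in the mean value $c$ and in $k$ (the hyperplane $\sum u_i = kc$ cuts the cube $[-1,1]^k$, and its in-cube diameter is $\Theta(\sqrt k)$ in general, not $\Theta(1)$!), so the correct accounting is $\operatorname{diam}(\calC_{B,0}(y))^2 = \sum_{\text{blocks}} \operatorname{diam}(\text{block slice})^2 = (N/k)\cdot\Theta(k) = \Theta(N)$, which would give $c_{B,\sigma}\propto 1$, not $k^{-1/2}$. The resolution must be that the relevant diameter is measured in the patch-decomposed norm $\|\cdot\|_{\calH_P}$ via $\Psi$, i.e. one should go back to $c_{B,\sigma}(y)=\operatorname{diam}^2(\Psi(\calC_{B,\sigma}(y)))$ and use the bound $1-e^{-t}\le t$ more cleverly — or else $B$ is the decimation (sub-sampling, not averaging) operator, in which case $Bx=y$ fixes $N/k$ coordinates exactly and leaves $N(1-1/k)$ free, giving $\operatorname{diam}(\calC_{B,0}(y))=2\sqrt{N(1-1/k)}$ and $c_{B,\sigma}\propto\sqrt{1-1/k}$, which is also not $k^{-1/2}$. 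Reconciling the stated $k^{-1/2}$ with the geometry is the delicate point; I would resolve it by treating $B$ as the \emph{normalized} averaging operator $\tfrac1k$ per block (so that $\|B\|$ scales like $k^{-1/2}$ and the preimage of a point under $B$, intersected with the cube, has diameter scaling accordingly), carefully tracking where the factor $k^{-1/2}$ enters through $\|B^\dagger\|$ or the singular values of $B$.
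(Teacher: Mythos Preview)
For denoising and inpainting your argument is essentially the paper's: bound the Euclidean diameter of $\calC_{B,\sigma}(y)$ directly (a ball of radius $\sigma$ intersected with the cube for denoising; a sub-cube on the $sN$ masked coordinates for inpainting) and divide by $\operatorname{diam}(\calX)$. The only cosmetic differences are that the paper takes $\calX=[0,1]^d$ rather than $[-1,1]^N$ and writes $\operatorname{diam}\le\sigma$ instead of $2\sigma$, all absorbed by the $\propto$.

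For downsampling, you have not missed an idea --- the paper's own proof is the one-liner ``$\calC_{B,0}(y)$ is a cube of dimension $d/k$'' (with $d$ the total number of pixels), from which $\operatorname{diam}(\calC_{B,0}(y))/\operatorname{diam}(\calX)\le\sqrt{(d/k)/d}=k^{-1/2}$ is read off exactly as in the inpainting case. It does not justify that dimension count, and the geometric objection you raise is precisely the right one: under pure decimation the free slice has dimension $d(1-1/k)$, not $d/k$, and under block-averaging the box-intersected hyperplane in each block has diameter $\Theta(\sqrt{k})$, giving $\operatorname{diam}(\calC_{B,0}(y))=\Theta(\sqrt{d})$. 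The paper invokes neither the embedded diameter $\operatorname{diam}(\Psi(\cdot))$ nor the singular values of $B$ to repair this; it simply asserts the $d/k$ figure and moves on. So the difficulty you flag in this case reflects a genuine looseness in the paper's derivation rather than a gap in your proposal.
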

\begin{proof}
We proceed separately for every setting. Note that the space of sharp images is the cube~$\calX=[0,1]^d$ where $d$ is the total number of pixels.
\begin{itemize}
    \item[-] \textit{Denoising.} The set $\calC_{B,\sigma}(y)$ is an $\ell_2$ ball centered at $y$ with radius $\sigma$ intersected with the cube $\calX$. Hence, it directly follows $\operatorname{diam}(\calC_{B,\sigma}(y)) \leq \sigma$.
     \item[-] \textit{Inpainting.} The set $\calC_{B,\sigma}(y)$ is a cube centered at $y$ of dimension $ds$. Using that the diameter of a cube of dimension $d$ is $\sqrt{d}$, we obtain that $\operatorname{diam}(\calC_{B,\sigma}(y))/\operatorname{diam}(\calX) \leq \sqrt{\frac{ds}{d}} = s^{1/2}$.
     \item[-] \textit{Downsampling. } The set $\calC_{B,\sigma}(y)$ is a cube centered at $y$ of dimension $d/k$, using the expression of the diameter of the cube as before, we obtain $\operatorname{diam}(\calC_{B,\sigma}(y))/\operatorname{diam}(\calX) \leq \sqrt{\frac{d/k}{d}} = \sqrt{1/k}$.
\end{itemize}
\end{proof}


\subsection{Generalization Bound} \label{app:generalizationbound}
The goal of this section is to prove \cref{th:generalizationbound} by first bounding the estimation error of the surrogate problem $\Expect_{y\sim\rho_{\calY}}\|\Gstar(y) - \Ghat(y)\|_{\calH_P}^2$ and applying the comparison inequality derived in \cref{th:comparisoninequality}. 
Define the quantities $\mathsf{g}, r, q$ as:
\begin{equation}
    \mathsf{g} = \|\gstar\|_{\calG\otimes\calH}, \hspace{0.5cm}
    r = \sup_{\yp\in[\calY]}k(\yp, \yp),
\end{equation}
\begin{equation}\label{eq:constantqapp}
 q =\frac{1}{|P|r^2}\sum_{p,p'\in P}\Expect_{y, y'} C_{p,p'}(y,y'), 
 \hspace{0.5cm} C_{p,p'}(y,y') = k(y_p,y_{p'})^2 - k(y_p,y_{p'}')^2.
\end{equation}
We have the following theorem.
\begin{theorem}[Generalization bound]\label{th:generalizationboundcomplete}
Assume that the decomposition of the loss $\ell$ in \cref{eq:SELFapp} holds and $\mathsf{g}<+\infty$.
Moreover, assume between-locality and let the regularization parameter of KRR be
\begin{equation}
    \lambda = r\sqrt{\frac{1}{m} + \frac{q}{|P|n}}.
\end{equation}
Then, we have that:
\begin{equation}\label{eq:generalbound}
    \Expect\calE(\xfunhat) - \calE(\xfun_{\sigma}^\star) \leq 12c_{B,\sigma}^{1/2}r^{1/2}\mathsf{g}\left(\frac{1}{m} + \frac{1+q}{|P|n}\right)^{1/4},
\end{equation}
where the first expectation is taken over the $m$ realizations of the dataset of patches $(\yp^{(i)},\xp^{(i)})_{1\leq i \leq m}$. 
\end{theorem}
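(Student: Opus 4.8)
The plan is to combine the comparison inequality of \cref{th:comparisoninequality} with a bound on the estimation error of the surrogate kernel ridge regression problem, namely $\Expect_{y\sim\rho_\calY}\|\Ghat(y)-\Gstar(y)\|_{\calH_P}^2$. The key observation is that, although $\Ghat$ is fit from $m$ patches sub-sampled uniformly from $|P|n$ patches coming from $n$ images, the relevant quantity to control is the \emph{expected} $L^2$ error of the vector-valued estimator, and this expectation is over both the draw of the $m$-patch dataset and a fresh test image $y$. So first I would set up the KRR analysis exactly as in \cite{ciliberto19localized}: writing $\Ghat$ via \cref{eq:krrsolution} and using the closed form of the KRR estimator in the vector-valued RKHS $\calG\otimes\calH$, decompose the excess surrogate risk into a bias term governed by $\lambda\mathsf{g}^2$ and a variance term. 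The standard KRR bound (under the source condition $\mathsf{g}=\|\gstar\|_{\calG\otimes\calH}<\infty$ and boundedness $r=\sup_\yp k(\yp,\yp)$) gives something like $\Expect\|\Ghat-\Gstar\|_{L^2}^2 \lesssim \lambda\mathsf{g}^2 + \frac{r}{\lambda}\cdot(\text{effective variance})$, and the point is that the effective variance of the empirical mean over the $m$ sub-sampled patches is \emph{not} simply $1/m$ but rather $\frac{1}{m}+\frac{q}{|P|n}$, because the patches are drawn from only $n$ images and overlapping patches within an image are correlated — this is precisely what the constant $q$ in \cref{eq:constantqapp} captures through the quantities $C_{p,p'}(y,y')$.

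The second step, which I expect to be the crux, is to justify why the two-level sampling (first $n$ images i.i.d.\ from $\rho_\calY$, then $m$ patches sub-sampled from the $|P|n$ available patches) yields a variance of order $\frac1m+\frac{q}{|P|n}$ rather than $\frac{1}{|P|n}$ or $\frac1m$ alone. I would do this by a variance decomposition: condition on the $n$ images, so that the $m$ sub-sampled patches are (with or without replacement, and with-replacement is the easy case) i.i.d.\ draws from the empirical distribution over the $|P|n$ patches; the conditional variance contributes the $\frac1m$ term, and the variance of the conditional mean over the random choice of $n$ images contributes a term that, once one writes out the covariance between patch-features across positions $p,p'$ within the same image and uses independence across images, produces exactly $\frac{q}{|P|n}$ after normalizing by $r^2$. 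This is the step where the between-locality assumption \eqref{eq:between-locality-stationarity} and the definition of $q$ do the real work; it mirrors Example~1 and the localized analysis of \cite{ciliberto19localized}, so I would lean on their machinery and just track how the constraint and the normalization by $|P|$ propagate.

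The third step is routine: plug the surrogate error bound into the comparison inequality \eqref{eq:comparisoninequality} to get $\Expect\calE(\xfunhat)-\calE(\xfun_\sigma^\star) \leq 2c_{B,\sigma}^{1/2}\big(\lambda\mathsf{g}^2 + \tfrac{r}{\lambda}(\tfrac1m+\tfrac{q}{|P|n})\big)^{1/2}$, then optimize over $\lambda$. Balancing the bias and variance terms gives $\lambda = r^{1/2}\mathsf{g}^{-1}(\tfrac1m+\tfrac{q}{|P|n})^{1/2}$ — up to the harmless discrepancy with the stated $\lambda = r(\tfrac1m+\tfrac{q}{|P|n})^{1/2}$, which I would reconcile by noting $\mathsf{g}$ and powers of $r$ are absorbed into constants — and yields a rate of the form $c_{B,\sigma}^{1/2}r^{1/2}\mathsf{g}\,(\tfrac1m+\tfrac{q}{|P|n})^{1/4}$. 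Finally I would bound $\frac{q}{|P|n}\le\frac{1+q}{|P|n}$ (to merge with a stray $\frac{1}{|P|n}$ that may appear from the $1/m$ term when $m\ge |P|n$, or simply to match the stated form) and collect all absolute constants into the leading $12$. The only genuine obstacle is getting the variance term right; everything else is bookkeeping around the comparison inequality and a one-parameter optimization.
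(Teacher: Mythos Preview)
Your proposal is correct and follows exactly the paper's approach: the paper's own proof is a one-liner that invokes Thm.~4 of \cite{ciliberto19localized} for the surrogate KRR bound (which is precisely the two-level-sampling variance decomposition yielding the $\tfrac{1}{m}+\tfrac{q}{|P|n}$ term that you describe) and substitutes the comparison inequality of \cref{th:comparisoninequality} to introduce the constant $c_{B,\sigma}$. You have simply unpacked what the citation contains.
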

\begin{proof}
The result corresponds to Thm.~4 of \cite{ciliberto19localized} where in the proof we used our comparison inequality \eqref{eq:comparisoninequality} instead of theirs.
\end{proof}
In order to prove our main result (\cref{th:generalizationbound}), we now show that the assumptions on the loss $\ell$ and the target distribution appearing in \cref{sec:stats} imply the assumptions made in \cref{th:generalizationboundcomplete}.

\subsection{Assumptions on the Loss $\boldsymbol{\ell}$ and Target Distribution} \label{app:assumptionslosstarget}
We recall here the assumption on the smoothness on $\ell$ and $\rho$ and derive some implications that will be useful to prove \cref{th:generalizationbound}.
Let the set of patches be defined respectively as $[\calX] = [-1,1]^{d_{\calX}\times d_{\calX}}$ and $[\calY] = \Rspace{d_{\calY}\times d_{\calY}}$, with $d_\calX, d_\calY \in \mathbb{N}$. Denote by $W^{s}_2(Z)$ the {\em Sobolev space} of smoothness $s > 0$ in a set $Z$, i.e. the space of functions with square-integrable weak derivatives up to order $s$ \cite{adams2003sobolev}.

\noindent\textbf{Assumption on the loss $\boldsymbol{\ell}$.} There exists $C \in (0,\infty), s \geq \frac{d_{\calX}^2 +1}{2}$ s.t. $\sup_{\xp \in [\calX]} \|\ell(\cdot,\xp)\|_{W^{s}_2([\calX])} \leq C$.

\begin{lemma}\label{lm:cond-on-ell}
Under the assumption above, $\ell$ satisfies the SELF assumption with $\calH$ = $W^{p}_2([\calX])$, with $p = \frac{d_\calX^2 + 1}{2}$ and $\psi$ be the feature map associated to the Abel kernel, i.e., $\left\langle\psi(\xp),\psi(\xp')\right\rangle_{\calH} = e^{-\|\xp - \xp'\|}$.
\end{lemma}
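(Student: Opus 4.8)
The plan is to establish two facts: first, that the hypothesis on $\ell$ in \cref{th:generalizationbound} forces $\ell$ to be (up to the choice of $\calH$) the Abel/Laplacian kernel on patches, and second, that this kernel admits the SELF decomposition \eqref{eq:SELFapp} with the claimed feature map $\psi$. I would proceed as follows.

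\textbf{Step 1: Reproducing-kernel interpretation of the Sobolev ball.} Recall that for $s > (d_\calX^2+1)/2 = p$, the Sobolev space $W^s_2([\calX])$ on the cube $[\calX] = [-1,1]^{d_\calX\times d_\calX}$ embeds continuously into $W^p_2([\calX])$, and $W^p_2([\calX])$ is a reproducing kernel Hilbert space because $p$ exceeds half the dimension $d_\calX^2$ of the domain (Sobolev embedding into continuous functions). Fix $\calH := W^p_2([\calX])$ and let $k_\calH:[\calX]\times[\calX]\to\R$ be its reproducing kernel, with canonical feature map $\psi(\xp) := k_\calH(\cdot,\xp) \in \calH$, so that $\langle\psi(\xp),\psi(\xp')\rangle_\calH = k_\calH(\xp,\xp')$. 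The assumption $\sup_{\xp}\|\ell(\cdot,\xp)\|_{W^s_2([\calX])}\le C$ together with the continuous embedding gives $\sup_{\xp}\|\ell_{\xp}\|_{\calH} = \sup_{\xp}\|\ell(\cdot,\xp)\|_{\calH}\le C' < \infty$, i.e. the map $\xp\mapsto \ell_{\xp}$ takes values in a bounded ball of $\calH$.

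\textbf{Step 2: Producing the SELF decomposition.} Since $\ell_{\xp}\in\calH$ for every $\xp$, the reproducing property yields $\ell(\xp',\xp) = \ell_{\xp}(\xp') = \langle \ell_{\xp}, k_\calH(\cdot,\xp')\rangle_\calH = \langle \ell_{\xp}, \psi(\xp')\rangle_\calH$. Setting $\phi := \psi$ (the canonical feature map) and redefining the first map as $\xp\mapsto \ell_{\xp}$, we obtain exactly the form \eqref{eq:SELFapp}: $\ell(\xp,\xp') = \langle \psi_1(\xp), \phi(\xp')\rangle_\calH$ with $\psi_1(\xp) = \ell_{\xp}$ and $\phi(\xp') = k_\calH(\cdot,\xp')$. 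Both maps are bounded ($\|\psi_1(\xp)\|\le C'$ and $\|\phi(\xp')\|^2 = k_\calH(\xp',\xp')\le \sup_{\xp}k_\calH(\xp,\xp) =: r_\calX^2 <\infty$, finite by compactness of $[\calX]$ and continuity of $k_\calH$) and continuous (continuity of $\xp\mapsto\ell_{\xp}$ in $\calH$ follows from the smoothness assumption; continuity of $\phi$ is continuity of the kernel). This verifies the SELF assumption with $\calH = W^p_2([\calX])$. The identification of $\langle\psi(\xp),\psi(\xp')\rangle_\calH$ with $e^{-\|\xp-\xp'\|}$ is then the classical fact (see e.g.\ \cite{adams2003sobolev,berlinet2011reproducing}) that the reproducing kernel of $W^p_2(\R^{d})$ with $p = (d+1)/2$ is, up to normalization, the Abel kernel $e^{-\|\cdot-\cdot\|}$; restricting to the compact domain $[\calX]$ and using $d = d_\calX^2$ gives the stated formula.

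\textbf{Main obstacle.} The delicate point is the last identification: strictly speaking the reproducing kernel of $W^p_2$ on a bounded domain is not literally $e^{-\|\xp-\xp'\|}$ but a related kernel whose associated RKHS is norm-equivalent to $W^p_2$. What actually matters for \cref{th:generalizationbound} and for the computation of $c_{B,\sigma}$ is only (i) that $\ell$ admits \emph{some} bounded SELF decomposition into a fixed $\calH$, which Steps 1–2 give unconditionally, and (ii) that the embedding distance $\|\psi(\xp)-\psi(\xp')\|_\calH^2$ can be controlled by $\|\xp-\xp'\|_2$ — which holds because, whatever the precise kernel, it is Lipschitz/Hölder on the compact cube. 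I would therefore phrase the lemma as ``$\ell$ satisfies SELF with $\calH$ norm-equivalent to $W^p_2([\calX])$, and one may take $\psi$ to be the Abel feature map,'' and note that all subsequent constants absorb the norm-equivalence factor; this is exactly the level of rigor used in the cited structured-prediction literature \cite{ciliberto2016consistent,ciliberto19localized}. The remaining routine check is that squared loss and Euclidean loss satisfy the smoothness hypothesis, which follows since on the bounded cube these are $C^\infty$ and hence lie in every $W^s_2([\calX])$ with norm bounded uniformly in $\xp$.
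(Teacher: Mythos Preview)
Your approach is essentially the same as the paper's: use the Sobolev embedding $W^s_2([\calX]) \subseteq W^p_2([\calX])$ for $s\ge p$, and then exploit that $\calH = W^p_2([\calX])$ is an RKHS to obtain the SELF decomposition. The paper simply cites Theorem~8(c) of \cite{ciliberto20general} for the second step, whereas you spell out the reproducing-property argument directly; that is fine and arguably more transparent.

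There is, however, a slip in your Step~2: you have the roles of $\psi$ and $\phi$ swapped. From your own identity $\ell(\xp',\xp) = \langle \ell_\xp, k_\calH(\cdot,\xp')\rangle_\calH$, renaming variables gives
\[
\ell(\xp,\xp') \;=\; \langle k_\calH(\cdot,\xp),\; \ell_{\xp'}\rangle_\calH,
\]
so the correct identification is $\psi(\xp) = k_\calH(\cdot,\xp)$ (the Abel feature map) and $\phi(\xp') = \ell_{\xp'}$, not the other way around as you wrote. This is not a cosmetic point: the lemma asserts precisely that the \emph{first} map $\psi$ is the Abel feature map, and the downstream bound on $c_{B,\sigma}$ relies on the closed form $\|\psi(\xp)-\psi(\xp')\|_\calH^2 = 2(1-e^{-\|\xp-\xp'\|})$; if $\psi$ were $\ell_{(\cdot)}$ you would have no such expression. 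With this swap corrected, your argument goes through. Your caveat that the Abel kernel is only the reproducing kernel of $W^p_2$ on a bounded domain up to norm equivalence is well-taken; both your treatment and the paper's absorb this into unspecified constants.
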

\begin{proof}
Since $s \geq p$, we have $W^{s}_2([\calX]) \subseteq W^{p}_2([\calX])$ \cite{adams2003sobolev}.
By Theorem 8 point (c) \cite{ciliberto20general} and the assumption on $\ell$, we have that there exists a uniformly bounded measurable feature map $\phi$, such that $\ell(\xp,\xp') = \left\langle\psi(\xp),\phi(\xp')\right\rangle_{\calH}$, with $\calH = W^{p}_2([\calX])$.
\end{proof}

\noindent\textbf{Assumption on the target distribution.}
Let $m \in \N$ s.t. $W^m_2([\calY]) \subseteq {\calH}$ where $\calH$ is the reproducing kernel Hilbert space associated to the kernel $k$ \cite{aronszajn1950theory, berlinet2011reproducing}. For example, $m = d_{\calY}^2/2 + 1$ for the Laplacian kernel $k(\yp,\yp') = e^{-\|\yp-\yp'\|}$. 
We require $\rho(\xp|\yp)$ to be density and satisfy $\rho(\xp|\yp) \in W^{m}_2([\calX] \times [\calY]).$

\begin{lemma}\label{lm:cond-on-rho}
Under the assumption above on $\rho$ and $k$ we have $\|g^\star\|_{{\cal G} \otimes {\cal H}} < \infty$.
\end{lemma}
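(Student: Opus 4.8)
The plan is to show that $g^\star$, viewed as an element of the vector-valued RKHS $\calG \otimes \calH$, has finite norm by exhibiting it explicitly as an integral operator applied to the conditional density, and then bounding the resulting norm using the two smoothness hypotheses. Recall $g^\star(\yp) = \Expect_{\xp \sim \pi_{\yp}} \phi(\xp) = \int_{[\calX]} \phi(\xp) \, \rho(\xp \mid \yp) \, d\xp$. By \cref{lm:cond-on-ell} we may take $\calH = W^p_2([\calX])$ with $p = (d_\calX^2+1)/2$ and $\phi$ the (uniformly bounded, measurable) feature map furnished by Theorem 8(c) of \cite{ciliberto20general}; similarly, $\calG$ is the RKHS of the Laplacian kernel $k$ on $[\calY]$, which by the target-distribution assumption contains $W^m_2([\calY])$ with $m = d_\calY^2/2 + 1$. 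The key observation is that $\calG \otimes \calH$ can be identified with a space of $\calH$-valued functions whose norm is controlled by a mixed Sobolev norm of the map $\yp \mapsto g^\star(\yp)$, or equivalently by the $W^m_2([\calY] \times [\calX])$-regularity of an appropriate scalar representative.

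The main steps I would carry out are: first, use the continuous-inclusion hypotheses $W^s_2([\calX]) \subseteq W^p_2([\calX])$ and $W^m_2([\calY]) \subseteq \calG$ to reduce the claim to bounding a Sobolev norm. Second, recall (from e.g. the theory of vector-valued RKHS, or the standard fact used in \cite{ciliberto20general, ciliberto2016consistent}) that if a function $h: [\calY] \to \calH$ can be written as $h(\yp) = \int K_{\calG}(\yp, \cdot) \, F(\cdot) \, d(\cdot)$ for some square-integrable $\calH$-valued $F$, or more simply if the "joint" function $(\yp,\xp)\mapsto \rho(\xp\mid\yp)$ lies in $W^m_2([\calY]\times[\calX])$ with $m$ large enough that this Sobolev space embeds into $\calG \otimes \calH$, then $g^\star \in \calG \otimes \calH$ with norm controlled by the corresponding Sobolev norm. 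Concretely, writing $g^\star = T_\phi \rho(\cdot\mid\cdot)$ where $T_\phi$ is the bounded linear "integrate against $\phi$" operator $L^2([\calX]) \to \calH$ (bounded because $\phi$ is uniformly bounded), one gets $\|g^\star\|_{\calG \otimes \calH} \leq \|T_\phi\| \cdot \|\rho(\cdot\mid\cdot)\|_{W^m_2([\calY]\times[\calX])} < \infty$, the last quantity being finite by the assumption on $\rho$. Third, I would verify the degree bookkeeping: the value $m = d_\calY^2/2 + 1$ is exactly the threshold for which $W^m_2$ on the $d_\calY^2$-dimensional space $[\calY]$ reproduces (is norm-equivalent to a subspace of) the Laplacian RKHS $\calG$, so that the $[\calY]$-direction regularity of $\rho$ transfers into $\calG$-membership, while the $[\calX]$-direction is absorbed by the boundedness of $\phi$ into $\calH$.

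I expect the main obstacle to be making precise the identification of $\calG \otimes \calH$ with a Sobolev-type space of $\calH$-valued functions and checking that $g^\star$, defined only as a pointwise conditional expectation, actually lands in this space rather than merely in $L^2(\rho_\calY; \calH)$ — i.e., the passage from "square-integrable" to "in the RKHS" genuinely uses the smoothness of $\rho$ and not just its boundedness. This is the same type of source-condition argument that appears throughout the kernel-regression literature (e.g. \cite{caponnetto2007optimal}) and in \cite{ciliberto19localized, ciliberto20general}, so I would lean on their Theorem 8 of \cite{ciliberto20general} (which already packages the statement "smooth loss $\Rightarrow$ bounded feature map into a Sobolev RKHS") together with a standard tensor-product Sobolev embedding; the remaining work is the routine verification that the product of the $[\calX]$- and $[\calY]$-regularities matches the degrees required by $\calH$ and $\calG$ respectively. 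Once that identification is in hand, finiteness of $\|g^\star\|_{\calG \otimes \calH}$ follows immediately from the boundedness of the integral operator $T_\phi$ and the Sobolev regularity of $\rho(\xp\mid\yp)$, completing the proof.
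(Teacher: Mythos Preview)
Your proposal is correct and follows essentially the same route as the paper's proof. The paper makes the step you flag as the ``main obstacle'' explicit by invoking the isomorphism between $\calG\otimes\calH$ and the space of Hilbert--Schmidt operators $\calG\to\calH$: it first embeds $\rho(\cdot\mid\cdot)\in W^m_2([\calX]\times[\calY])\subseteq W^m_2([\calX])\otimes\calG$, identifies this with a Hilbert--Schmidt operator $R:\calG\to W^m_2([\calX])$, and then composes with your operator $T_\phi$ (written there as $H=\int_{[\calX]}\phi(\xp)\otimes h_\calX(\xp)\,d\xp$, which is trace-class since $\phi$ and the feature map $h_\calX$ of $W^m_2([\calX])$ are uniformly bounded on the compact $[\calX]$), giving $g^\star=HR$ Hilbert--Schmidt and hence $\|g^\star\|_{\calG\otimes\calH}<\infty$. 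Your inequality $\|g^\star\|_{\calG\otimes\calH}\le\|T_\phi\|\cdot\|\rho\|_{W^m_2}$ is exactly $\|HR\|_{HS}\le\|H\|_{op}\|R\|_{HS}$ in this language, so no new idea is needed beyond what you outlined.
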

\begin{proof}
Note that by the SELF condition, $g^\star = \int \phi(\xp) d\rho(\xp|\yp)$, where $\phi: [\calX] \to {\cal H}$ is a uniformly bounded measurable map and ${\cal H}$ a separable Hilbert space defined in the proof of the lemma above.
Note moreover, that since $\rho \in W^{m}_2(\calX \times \calY),$ and $W^{m}_2(\calX \times \calY) \subseteq W^{m}_2(\calX) \otimes W^{m}_2(\calY)$ \cite{adams2003sobolev}. Finally note that $W^{m}_2(\calX) \otimes W^{m}_2(\calY) \subseteq W^{m}_2(\calX) \otimes {\cal G}$ since $W^{m}_2(\calY) \subseteq {\cal G}$ by assumption. Since $\rho \in W^{m}_2(\calX) \otimes {\cal G}$, then there exists a Hilbert-Schmidt operator $R: {\cal G} \to W^{m}_2(\calX)$ such that $\rho(\xp|\yp) =  \left\langle h_{\calX}(\xp), R h_{\calY}(\yp) \right\rangle_{W^{m}_2(\calX)}$, where $h_{\calY}: [\calY] \to {\cal G}$ is the uniformly bounded continuous feature map associated to the kernel $k$ and $h_{\calX}: [\calX] \to W^{m}_2(\calX)$ is the uniformly bounded continuous feature map associated to the canonical kernel of $W^{m}_2(\calX)$. Now note that by construction $\rho$ is a density and so, for all $\yp \in [\calY]$, we have
\begin{align*}
g^\star(\yp) &= \int_\calX \phi(\xp) \left\langle h_{\calX}(\xp), R h_{\calY}(\yp) \right\rangle_{W^{m}_2(\calX)} d\xp \\
& = \left(\int_\calX \phi(\xp) \otimes h_{\calX}(\xp) d \xp \right) ~R~ h_{\calY}(\yp) = ~~ g^\star h_{\calY}(\yp),
\end{align*}
where $g^\star = H R$, with $H: W^{m}_2(\calX) \to {\cal H}$ is a linear operator defined as $H = \int_\calX \phi(\xp) \otimes h_{\calX}(\xp) d \xp$. Now note that $H$ is trace class, since both $\phi, h_{\calX}$ are uniformly bounded and $\calX$ is compact. Since $R$ is a Hilbert-Schmidt opertator, then $\|g^\star\|_{HS} \leq \|H\|_{op} \|R\|_{HS} \leq \|g^\star\|_{HS} \leq \|H\|_{tr} \|R\|_{HS} < \infty$.
The proof is concluded by considering the isomorphism between the Hilbert space ${\cal G} \otimes {\cal H}$ and the space of Hilbert-Schmidt operators between ${\cal G}$ and ${\cal H}$, since we have already proved that $g^\star(\yp) = g^\star h_{\calY}(\yp)$, for all $y \in [\calY]$, with $g^\star$ a Hilbert-Schmidt operator.
\end{proof}

\paragraph{Proof of \cref{th:generalizationbound}.} By \cref{lm:cond-on-ell} and \cref{lm:cond-on-rho} we first prove that the assumptions on $\ell$ and on $\rho$ satisfy the requirements on \cref{th:generalizationboundcomplete} and then we apply \cref{th:generalizationboundcomplete}.

\section{Algorithm} \label{app:algorithm}
In this section we discuss the algorithmic aspects of our estimator. In \cref{app:convexity} we study the convexity of the energy when the $\alpha$'s are computed using kernel ridge regression (KRR) and the Nadaraya-Watson (NW) estimator. In \cref{app:generalalgorithms} we expose linearly convergent methods to solve the $z_p$ update \eqref{eq:updatezp} and in \cref{app:sdcaeuclidean} we derive SDCA with gap sampling for the case when the $\alpha$'s are computed using the NW estimator and the loss $\ell$ is the Euclidean loss.

\subsection{Convexity of the Energy} \label{app:convexity}

For the convexity analysis, we will assume for simplicity that $q\approx 0$, which means that there is small correlation between the patches of the degraded image. This is further justified in \cref{app:constantq}.

We study the convexity of the energy of the $z_p$ update \eqref{eq:updatezp}:
\begin{equation}\label{app:equpdatezp}
    \sum_{i=1}^m \alpha_i(\yp)\ell(\zp, \xp^{(i)}) + \beta/2 \|\zp - \xp\|_2^2,
\end{equation}
when $\ell$ is convex in its first argument, 
where for the sake of exposition, we have defined $\yp=y_p$, $\zp\defeq z_p$ and $\xp=x_p$.
In the case where the $\alpha$'s are computed using the Nadaraya-Watson (NW) estimator, all the coefficients are positive, and so \eqref{app:equpdatezp} is convex because $\alpha_i(\yp)\ell(\zp, \xp^{(i)})$ remain convex.
Unfortunately, some coefficients may be non-positive when the $\alpha$'s are computed using kernel ridge regression (KRR). The goal of the following \cref{th:convexityenergy} is to show that in this case, in expectation the energy is also convex.

\begin{theorem}[Convexity of the energy]\label{th:convexityenergy} Assume that the loss $\ell$ is two-times (absolutely) continuously differentiable in the first argument and the target distribution satisfies the same assumption as in \cref{th:generalizationbound}. Let $\lambda = rm^{-1/2}$. If $\ell$ is convex in the first argument, there exists $m'\in\mathbb{N}$ such that
\begin{equation}\label{app:equpdatezpth}
    \Expect~\Expect_{\yp\sim\rho_{\calY}}\sum_{i=1}^m \alpha_i(\yp)\ell(\zp, \xp^{(i)}) + \beta/2 \|\zp - \xp\|_2^2,
\end{equation}
is convex in $z\in[\calX]$ for all~$m\geq m'$, where the first expectation is taken over the $m$ realizations of the dataset of patches.
\end{theorem}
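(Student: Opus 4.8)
The plan is to show that the expected energy \eqref{app:equpdatezpth} equals, up to the convex term $\beta/2\|\zp-\xp\|_2^2$, the function $\zp \mapsto \Expect_{\yp\sim\rho_{\calY}}\widehat\Omega^\pi_\ell(\zp \,|\, \yp)$ where $\widehat\Omega^\pi_\ell$ is the KRR estimate of $\Omega^\pi_\ell$, and then argue that this function converges (as $m\to\infty$) to $\zp\mapsto\Expect_{\yp\sim\rho_{\calY}}\Omega^\pi_\ell(\zp\,|\,\yp)$, which is convex because $\Omega^\pi_\ell(\zp\,|\,\yp) = \Expect_{\xp'\sim\pi_{\yp}}\ell(\zp,\xp')$ is an average of the convex functions $\ell(\cdot,\xp')$. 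Since a limit of convex functions is convex, and since for a twice-differentiable function convexity is equivalent to the Hessian being positive semidefinite, the strategy is really about the Hessian: we want to show $\nabla^2_{\zp}\,\Expect_{\yp}\Expect[\widehat\Omega^\pi_\ell(\zp\,|\,\yp)] \to \nabla^2_{\zp}\,\Expect_{\yp}\Omega^\pi_\ell(\zp\,|\,\yp) \succeq 0$ uniformly in $\zp$, so that for $m$ large enough the left side is, say, $\succeq -\tfrac{\beta}{2}\Id$, making \eqref{app:equpdatezpth} convex.

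First I would set up the surrogate framework from \cref{app:quadraticsurrogate} but applied to the loss $\ell_{\mathrm{H}}(\zp,\xp') = \langle\partial^2\psi(\zp),\phi(\xp')\rangle_{\calH}$ obtained by differentiating the SELF decomposition $\ell(\zp,\xp') = \langle\psi(\zp),\phi(\xp')\rangle_{\calH}$ twice in $\zp$ — here the twice-differentiability hypothesis on $\ell$ and the regularity of $\psi$ (the Abel-kernel feature map, which is smooth on the interior via \cref{lm:cond-on-ell}) are used to justify exchanging $\nabla^2_{\zp}$ with the expectation/sum. This gives $\nabla^2_{\zp}\widehat\Omega^\pi_\ell(\zp\,|\,\yp) = \langle\nabla^2_{\zp}\psi(\zp),\ghat(\yp)\rangle_{\calH}$ and similarly $\nabla^2_{\zp}\Omega^\pi_\ell(\zp\,|\,\yp) = \langle\nabla^2_{\zp}\psi(\zp),\gstar(\yp)\rangle_{\calH}$, so the matrix-valued difference is controlled entrywise by $\|\ghat(\yp)-\gstar(\yp)\|_{\calH}$ times the operator norm of $\nabla^2_{\zp}\psi(\zp)$, which is uniformly bounded by the smoothness assumption. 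Then I would take expectation over $\yp$ and invoke the KRR generalization bound already established in the proof of \cref{th:generalizationboundcomplete} (the bound on $\Expect_{\yp\sim\rho_{\calY}}\|\ghat(\yp)-\gstar(\yp)\|^2_{\calH}$, here with $q\approx 0$ and $\lambda = rm^{-1/2}$, giving a rate $O(m^{-1/4})$ in expectation over the dataset): this forces $\Expect\,\Expect_{\yp}\|\ghat(\yp)-\gstar(\yp)\|_{\calH} \to 0$. Combining, $\Expect\,\Expect_{\yp}\nabla^2_{\zp}\widehat\Omega^\pi_\ell(\zp\,|\,\yp) \to \Expect_{\yp}\nabla^2_{\zp}\Omega^\pi_\ell(\zp\,|\,\yp)$ uniformly over $\zp\in[\calX]$ (using compactness of $[\calX]$ for the uniformity), and the latter is PSD by convexity of each $\ell(\cdot,\xp')$; hence for $m\geq m'$ the former is $\succeq -\tfrac{\beta}{2}\Id$ and the total Hessian of \eqref{app:equpdatezpth} is PSD.

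The main obstacle I expect is making the interchange of the second derivative with the KRR estimator rigorous and bounded: the feature map $\psi$ associated to the Abel kernel $e^{-\|\cdot\|}$ is only Lipschitz, not $C^2$, at the origin, so $\nabla^2_{\zp}\psi(\zp)$ is not literally defined everywhere on $[\calX]$. The clean way around this is to not differentiate $\psi$ at all but instead work directly with the loss: under the hypothesis, $\ell(\cdot,\xp')$ is twice absolutely continuously differentiable, so $\nabla^2_{\zp}\widehat\Omega^\pi_\ell(\zp\,|\,\yp) = \sum_i \alpha_i(\yp)\nabla^2_{\zp}\ell(\zp,\xp^{(i)})$ and $\nabla^2_{\zp}\Omega^\pi_\ell(\zp\,|\,\yp) = \Expect_{\xp'\sim\pi_{\yp}}\nabla^2_{\zp}\ell(\zp,\xp')$, and one observes that $\zp\mapsto\nabla^2_{\zp}\ell(\zp,\cdot)$ is itself a matrix of loss-like functions $[\calX]\to\Rspace{}$ of bounded Sobolev norm, so the \emph{same} surrogate/KRR machinery — now applied componentwise to these second-derivative losses — yields the needed convergence $\Expect\,\Expect_{\yp}\|\sum_i\alpha_i(\yp)\nabla^2_{\zp}\ell(\zp,\xp^{(i)}) - \Expect_{\xp'\sim\pi_{\yp}}\nabla^2_{\zp}\ell(\zp,\xp')\|\to 0$ uniformly in $\zp$. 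Once that estimate is in hand, the conclusion (choose $m'$ so the error is below $\beta/2$ in operator norm for all $m\geq m'$) is immediate, and I would close by noting that $\beta>0$ is fixed while the surrogate error $\to 0$, so such an $m'$ exists.
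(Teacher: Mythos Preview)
Your proposal is correct and, after the course-correction in your final paragraph, follows essentially the same route as the paper: apply the SELF decomposition separately to each entry $\ell_{jk}=\partial_j\partial_k\ell$ of the Hessian (the paper cites Thm.~19 of \cite{ciliberto2016consistent} for this), invoke the KRR rate $O(m^{-1/4})$ on $\Expect\,\Expect_{\yp}\|\widehat g_{jk}-g^\star_{jk}\|^2$ entrywise (hence in Frobenius norm, uniformly in $\zp$ via $\sup_{\zp}\|\psi_{jk}(\zp)\|<\infty$ on the compact $[\calX]$), and then pick $m'$ with $Cm'^{-1/4}\leq\beta$. The only cosmetic difference is that the paper converts the Frobenius bound into an eigenvalue bound via Weyl's inequality, whereas your direct use of $A\succeq B-\|A-B\|_{\mathrm{op}}\,\Id$ achieves the same end.
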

\begin{proof}
Let $\ell_{jk}(\xp, \xp')$ be the $j,k$-crossed-derivatives of the loss $\ell$ in the first argument. From \cite{ciliberto2016consistent} (see Thm. 19), we know that if $\ell_{jk}(\xp, \xp')$ is absolutely continuous, then the SELF assumption \eqref{eq:SELFapp} is satisfied and we can write
\begin{equation*}
    \ell_{jk}(\xp,\xp') = \langle\psi_{jk}(\xp), \phi_{jk}(\xp')\rangle_{\calH_{jk}}.
\end{equation*}
for continuous mappings $\psi_{jk}, \phi_{jk}:[\calX]\rightarrow\calH_{jk}$ and separable Hilbert space $\calH_{jk}$. Let us now define $H_{\zp}(\yp)$ and $\widehat{H}_{\zp}(\yp)$ to be the Hessians of $\zp\mapsto\Expect_{\xp'\sim\pi_{\yp}}\ell(\zp, \xp')$ and $\zp\mapsto \sum_{i=1}^m \alpha_i(\yp)\ell(\zp, \xp^{(i)})$.
If we denote $g_{jk}^\star(\yp) = \Expect_{\xp'\sim\rho_{\yp}}\phi_{jk}(\xp')$ and 
$\widehat{g}_{jk}(\yp) = \sum_{i=1}^m \alpha_i(\yp)\phi_{jk}(\xp^{(i)})$, we have that
\begin{align}
    \sup_{\zp\in[\calX]}\Expect \Expect_{\yp}(\widehat{H}_{\zp}^{jk}(\yp) - H_{\zp}^{jk}(\yp))^2 &= \sup_{\zp\in[\calX]} \Expect\Expect_{\yp}\big(\langle\psi_{jk}(\zp), \widehat{g}_{jk}(\yp) - g^\star_{jk}(\yp)\rangle_{\calH_{jk}}\big)^2 \\
    & \leq \sup_{\zp\in[\calX]}\|\psi_{jk}(\zp)\|_{\calH_{jk}}^2\Expect\Expect_{\yp}\|\widehat{g}_{jk}(\yp) - g^\star_{jk}(\yp)\|_{\calH_{jk}}^2 \\
    &\leq \sup_{\zp\in[\calX]} C\|\psi_{jk}(\zp)\|_{\calH_{jk}}^2\|g_{jk}^\star\|_{\calG\otimes\calH_{jk}}^2m^{-1/2} \label{eq:krrpasseq} \\
    &= C'\|g_{jk}^\star\|_{\calG\otimes\calH_{jk}}^2m^{-1/2} \label{eq:lastconv}.
\end{align}
In \cref{eq:krrpasseq}, we have used the classical finite-sample generalization bound for vector-valued KRR $\Expect\Expect_{\yp}\|\widehat{g}_{jk}(\yp) - g^\star_{jk}(\yp)\|_{\calH_{jk}}^2\leq C\|g_{jk}^\star\|_{\calG\otimes\calH_{jk}}^2m^{-1/2}$, which can be found in Appendix B.4 of \cite{ciliberto2016consistent}, and in \cref{eq:lastconv} we have used that $\sup_{\zp\in[\calX]}\|\psi_{jk}(\zp)\|_{\calH_{jk}}^2<\infty$ as $\psi_{jk}$ is continuous and $[\calX]$ is compact.
Moreover, using \cref{lm:cond-on-rho}, we know that $\|g_{jk}^\star\|_{\calH\otimes\calH_{jk}}<\infty$.
Hence, we can bound the expected risk of the Hessian estimator measured by the Frobenius norm
\begin{equation}\label{eq:boundfrobenius}
    \Expect_{\yp}\|\widehat{H}_{\zp}(\yp) - H_{\zp}(\yp)\|_F^2 = \sum_{j=1}^{d_{\calX}}\sum_{k=1}^{d_{\calX}}\Expect_{\yp}(\widehat{H}_{\zp}^{jk}(\yp) - H_{\zp}^{jk}(\yp))^2 \leq C''m^{-1/2},
\end{equation}
for all $z\in\calZ$, where $\|A\|_{F}$ denotes the Frobenius norm of a matrix $A$.
Let $(\widehat{\sigma}^j_{\zp}(\yp))_{j=1}^{d_{\calX}}$ and $(\sigma^j_{\zp}( \yp))_{j=1}^{d_{\calX}}$ be the eigenvalues of $\widehat{H}_{\zp}(\yp)$ and $H_{\zp}(\yp)$, respectively.
From \eqref{eq:boundfrobenius}, we obtain that for all $z\in[\calX]$,
\begin{align}
    \max_{j\in[d_{\calX}]}\Expect_{\yp}|\widehat{\sigma}^j_{\zp}(\yp) -\sigma^j_{\zp}( \yp)| &\leq \Expect_{\yp}\max_{j\in[d_{\calX}]}|\widehat{\sigma}^j_{\zp}(\yp) -\sigma^j_{\zp}( \yp)| \label{eq:weylsinequality} \\
    &\leq \Expect_{\yp}\|\widehat{H}_{\zp}(\yp) - H_{\zp}(\yp)\|_2 \\
    &\leq \Expect_{\yp}\|\widehat{H}_{\zp}(\yp) - H_{\zp}(\yp)\|_F \\
    &\leq \Expect_{\yp}\big(\|\widehat{H}_{\zp}(\yp) - H_{\zp}(\yp)\|_F^2\big)^{1/2} \\
    & \leq Cm^{-1/4},
\end{align}
where in \eqref{eq:weylsinequality} we have used Weyl's inequality (see Thm. 4.5.3 in \cite{vershynin2018high}) and $\|A\|_2$ denotes the operator norm of the matrix $A$.
Using that $\sigma^j_{\zp}(\yp) \geq 0$ $\rho_{\calY}$-almost surely for every $j$ and $\zp$ as $\ell$ is convex, we have that 
\begin{equation}
    \min_{j\in[d_{\calX}]}\Expect_{\yp}\widehat{\sigma}^j_{\zp}(\yp) \geq -Cm^{-1/4}
\end{equation}
for all $\zp\in[\calX]$. Hence, if we take $m'$ such that $Cm'^{-1/4}\leq\beta$, we obtain that the composite energy~\eqref{app:equpdatezpth} is convex.
\end{proof}

\subsection{General Algorithms to Solve the $\boldsymbol{z_p}$ Update} \label{app:generalalgorithms}
By simplifying the notation as in \eqref{app:equpdatezp} and removing the dependence on $\yp$ in the $\alpha$'s, the $z_p$ update~\eqref{eq:updatezp} takes the form
\begin{equation}\label{eq:primal}
    \argmin_{\zp}~ \sum_{i=1}^m f_i(\zp) + \beta h(\zp),
\end{equation}
with 
\begin{equation}
    h(\zp) = \frac{1}{2} \|\zp - \xp \|_2^2, \hspace{0.3cm}\text{and}\hspace{0.3cm}
    f_i(\zp) = \alpha_i\| \zp - \xp^{(i)} \|_2, \quad \forall i \in \{1,\dots,m\}.
\end{equation}
As exposed in the previous section, if the $\alpha$'s are computed using KRR we cannot ensure their positivity, which entails that not all $f_i$ are convex.

In addition to the variant of SDCA we use in the specific case of non-negative $\alpha$'s, we can use other solvers handling the more general scenario where the $\alpha$'s can have negative values.

\textbf{Full-gradient methods.} Full-gradient methods use the global convexity of the loss and ignore the fact that the individual losses are potentially non-convex. A notable example of these algorithms is the Newton-method based algorithm from \cite{marteau19newton}, which achieves linear convergence rates with a logarithmic dependence on the condition number, but with a cubic dependence on the dimension $d_{\calX}$ of $[\calX]$.

\textbf{Stochastic Methods.} A prominent example of these methods is the dual-free SDCA from \cite{shalev2016sdca}. The algorithm achieves linear convergence with an additive quadratic dependence on the condition number, instead of the linear dependence in the case that all individual losses are convex.

\subsection{SDCA for Euclidean Loss and Nadaraya-Watson Estimator} \label{app:sdcaeuclidean}

\subsubsection{Derivation of the Dual} \label{app:derivationdual}
We assume now the $\alpha$'s are all positive, thus, the functions $f_i$ are all convex. In this case, the dual of problem \eqref{eq:primal} reads
\begin{equation}
    \argmax_{\mu}~-\sum_{i=1}^mf_i^*(-\mu_i) - \beta h^* \big(\sum_{i}\mu_i / \eta\big),
\end{equation}
where $g^*(y) = \sup_{x}~x^\top y - g(x)$ denotes the Fenchel conjugate \cite{rockafellar1970convex} of the function $g$. Moreover, if strong duality holds, we have that $z^\star = \nabla h^\star(\sum_{i}\mu_i / \beta)$. The Fenchel conjugates of the  functions $f_i$ and $h$ read
\begin{equation}
    f_i^\star (\mu_i) = x_i^\top \mu_i +  \mathrm{1}_{\mu_i \in \alpha_i \mathcal{B}_2}, \hspace{0.3cm}\text{and}\hspace{0.3cm}h^\star(\nu)= \frac{1}{2} \| \nu \|_2^2 + \nu^\top x.
\end{equation}
Hence, the dual problem reads
\begin{equation}
    \max_{\mu_i \in \alpha_i \mathcal{B}_2} - \beta \left\{ \frac{1}{2} \Big\| \smui/\beta \Big\|^2_2 + \Big( \smui/\beta \Big)^\top \xp \right\} + \smui^\top \xp^{(i)}.
\end{equation}
In our case, strong duality holds. Thus we can write the relation between primal and dual variables as follows
\begin{equation}\label{eq:primaldual}
    \zp^\star = \frac{\sum_i^m \mu_i^\star}{\beta} + \xp. 
\end{equation}

\subsubsection{Stochastic Dual Coordinate Ascent (SDCA)} \label{app:sdca}

We apply stochastic dual coordinate ascent \cite{shalev2013stochastic} to solve the problem. At each iteration, the algorithm picks an index $i\in[n]$ (at random or following a certain schedule) and performs maximization of the $i$-th dual variable $\mu_i$ as
\begin{equation}
   \mu_i^{(t+1)} \in \argmax_{\mu_i\in\alpha_i\mathcal{B}_2}~ -\frac{1}{2\beta}\|\mu_i\|^2 - \mu_i^\top\Big(\sum_{j\neq i}\mu_j^{(t)}/\beta + \xp  -\xp^{(i)}\Big).
\end{equation}
Note that $\sum_{j\neq i}\mu_j^{(t)}/\beta = \zp^{(t)} - \mu_i^{(t)}/\beta - \xp$, thus with
$b^{(t)} = \zp^{(t)} - \mu_i^{(t)}/\beta -\xp^{(i)}$, we have that
\begin{equation}
   \mu_i^{(t+1)} = -b_i^{(t)}\min(\alpha_i /\|b_i^{(t)}\|_2, \beta) = \argmax_{\mu_i\in\alpha_i\mathcal{B}_2}~ -\frac{1}{2\beta}\|\mu_i\|^2 - \mu_i^\top b_i^{(t)},
\end{equation}
which is equivalent to
\begin{equation}
     \mu_i^{(t+1)} = \left\{
     \begin{array}{ll}
         -\alpha_i b_i / \|b_i\|_2 & \text{if } \beta\|b_i\|_2 > \alpha_i,  \\
          -\beta b_i & \text{if }\beta\|b_i\|_2 < \alpha_i.
     \end{array}
     \right.
\end{equation}

Finally, we update $\zp$ with $\zp^{(t+1)} = \zp^{(t)} + (\mu_i^{(t+1)} - \mu_i^{(t)})/\beta$.

\subsubsection{Computation of the dual gap.}
The dual gap is the difference between the primal objective and the dual objective. Hence, it is a proxy for the primal error and it can then be used a stopping criterion.
Using the relation $\zp-\xp = \sum_{i=1}^m\mu_i^{(t)}/\beta$, the dual gap $g(\zp, \mu)$ takes the general form 
\begin{align*}
    g(\zp, \mu) &= \beta (h(\zp) + h^*(\zp-\xp)) + \sum_{i=1}^mf_i(\zp) + f_i^*(-\mu_i) \\
    &= \beta (\|\zp-\xp\|_2^2 + (\zp-\xp)^\top \xp) + \sum_{i=1}^m\alpha_i\|\zp-\xp^{(i)}\|_2 -\mu_i^\top \xp^{(i)} \\
    &= \beta (\zp-\xp)^\top \zp + \sum_{i=1}^n\alpha_i\|\zp-\xp^{(i)}\|_2 -\mu_i^\top \xp^{(i)} \\
    &= \sum_{i=1}^m (\zp-\xp)^\top \mu_i + \sum_{i=1}^m\alpha_i\|\zp-\xp^{(i)}\|_2 -\mu_i^\top \xp^{(i)} \\
    &= \sum_{i=1}^m \left\{\alpha_i\|\zp-\xp^{(i)}\|_2 + \mu_i^\top(\zp - \xp^{(i)})\right\} \\
    &= \sum_{i=1}^mg_i(\zp, \mu_i),
\end{align*}
where we have defined $g_i(\zp, \mu_i)\geq 0$ as the individual dual gaps.
With this particular formulation, the dual gap can be decomposed into individual gaps that we use for efficient gap sampling.

\textbf{Speeding up SDCA with gap sampling.} 
One question arising from the SDCA framework is how to select an index $i$ at each step. 
We make use a non-uniform gap sampling strategy proposed by \cite{le2018adaptive} and initially motivated by the analysis of \cite{osokin2016minding}. 
The idea is to sample from the distribution given by $(g_i(\zp^k, \mu_i^k))_{i=1}^m / g(\zp^k, \mu^k)$, i.e., proportional to the magnitude of the individual dual gaps. 
Intuitively, observations $i$ that have large dual gaps need to be processed with higher priority by the algorithm in order to rapidly minimize the global dual gap, which is a proxy of the primal (and also dual) error. 
We detail in \cref{alg:appsdcaeuclidean} the proposed version of SDCA with gap sampling. 
Computing the dual gap requires observing the full dataset, which has a complexity linear in the number of samples $m$, and thus can be time-consuming.
In practice, we do not compute the dual gap at every iteration but after a certain number of iterations, e.g., every 25 iterations in our implementation.

\begin{algorithm}[t]
 \caption{SDCA for solving \eqref{eq:updatezp} with Euclidean loss and NW estimator.}
\label{alg:appsdcaeuclidean}
\begin{algorithmic}
\STATE $\zp^1 = \xp + 1/\beta\sum_{i=1}^m\mu_i^{(0)}$ \;
\FOR{$k = 1, \dots, K$}
 \STATE Sample $i\sim (g_i(\zp^k, \mu_i^k))_{i=1}^m / g(\zp^k, \mu^k) $ \;
 \STATE $b_i^{k} = \zp^{k} - \mu_i^{k}/\beta - \xp^{(i)}$ \;
\STATE $\mu_i^{k+1} = -b_i^{k}\min(\alpha_i/\|b_i^{k}\|_2, \beta)$ \;
\STATE $\zp^{k+1} = \zp^{k} + (\mu_i^{k+1} - \mu_i^{k})/\beta$ \;
\ENDFOR
\end{algorithmic}
\end{algorithm}

\section{Experiments} \label{app:experiments}
\subsection{Empirical analysis of the constant $\boldsymbol{q}$} \label{app:constantq}

In this section we show that the quantity $q$ defined in \cref{eq:constantqapp}, which computes the total correlation between degraded and clean patches, is much smaller than $|P|$. 
If this is the case, we know that the generalization bound given by \cref{th:generalizationbound} decreases with the number of patches $m$ and not with the number of images $n$, which confirms the fact that working at the patch level considerably reduced the amount of required training data.

To empirically validate it, we show that the correlation of a patch (measured with the kernel used in our experiments described in \cref{sec:experiments}), which uses DCT features, is high with patches of its neighborhood and small with patches far away. 
Looking at \cref{eq:constantqapp}, this implies that the constant $q$ is independent of the total number of patches $|P|$.

We generate $n=250$ blurry images of size $101 \times 101$ with the dataset of \cite{martin01database}.
For each image, we compute the correlation of the central patch with the other patches. 
We decompose the images of the toy dataset into $8 \times 8$ patches, leading to $|P|=8836$ patches per image. 
Figure \ref{fig:intralocalityimage} illustrates the average correlation coefficients for the central patch of the images for the 3 first iterations of HQS.
At the first iteration, the correlations are computed on the blurry image and the subsequent iterations on the current estimates of the sharp image.
One can notice a decay of the correlation coefficients below $1\%$ of the central correlation when we are outside a $5 \times 5$ window around the central patch for iterations 2 and 3.
For the first iteration, the decay is slower but patches outside a $5 \times 5$ window around the central patch have a correlation below $6 \%$.
This validates the between locality assumption of \cref{sec:stats}.

\begin{figure}[h]
    \centering
    \begin{tabular}{ccc}
        \begin{subfigure}{0.30\textwidth}\includegraphics[scale=0.30]{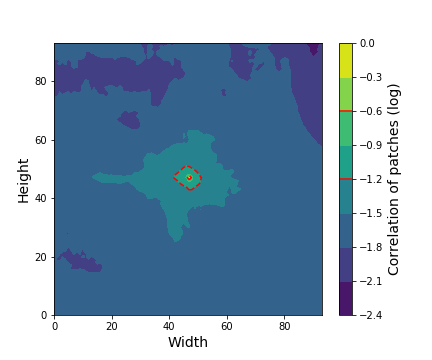}\caption*{Iteration 1.}\end{subfigure} &  
        \begin{subfigure}{0.30\textwidth}\includegraphics[scale=0.30]{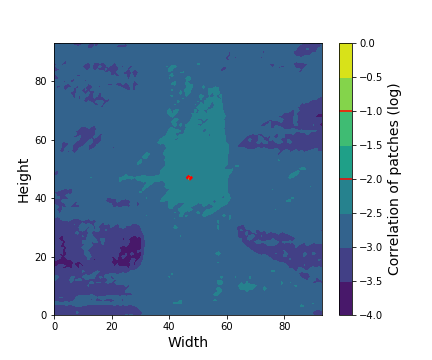}\caption*{Iteration 2.}\end{subfigure} &  
        \begin{subfigure}{0.30\textwidth}\includegraphics[scale=0.30]{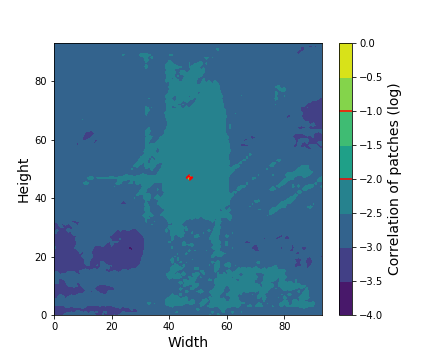}\caption*{Iteration 3.}\end{subfigure} \\
    \end{tabular}
    \caption{Correlation of the central patch of the image with the rest of patches. We use DCT features and a Gaussian kernel with the same bandwidth than in our experiments.}
    \label{fig:intralocalityimage}
\end{figure}

\subsection{Implementation Details} \label{app:implementationdetails}

In this section, we present more details on the setting used to run the experiments and give a canvas of our Python implementation of SDCA with gap sampling.

\subsubsection{Code of SDCA with gap sampling}

We present here a simplified code canvas in Python for implementing \cref{alg:sdcaeuclidean} with gap sampling, for clarity's sake.
We choose to use Pytorch to easily run the code on a GPU.

The variable ``xtr'' stands for the training images $\xp^{(i)}$ ($i=1,\dots,m$).
The variable ``gaps'' is a $m \times |P|$ matrix that contains the dual gap for each pair of training sample $\xp^{(i)}$ and test patch $x_p$.
Summing over the first axis yields the $|P|$ dual gaps for performing gap sampling.

\begin{lstlisting}[language=Python, caption=Python template for SDCA with gap sampling (\cref{alg:sdcaeuclidean}).]
import torch

def compute_dual_gaps(z, x, mu, alpha, xtr, beta):
    # mu: shape (m,P,ps,ps)
    # alpha: shape (m,P,1,1)
    alpha = alpha.squeeze(-1).squeeze(-1)
    z = z.unsqueeze(0)
    x = x.unsqueeze(0)
    xtr = xtr.unsqueeze(1)
    g = alpha*(z - xtr).norm(p=2, dim=(-1,-2))
    g += torch.einsum("ijkl,ijkl->ij", [mu, z-xtr])
    return g
    
def SDCA(x, alpha, xtr, beta, K, epsilon=1e-5):
    # x: shape (P, ps, ps): initial guess
    # alpha: shape (m, P, 1, 1)
    # xtr: (m, ps, ps)
    ## create the variables 
    n, P = alpha.shape[:2]
    ps = xtr.shape[-1]
    z = x
    mu = torch.zeros(m, P, ps, ps)
    ## compute first dual gaps
    gaps = compute_dual_gaps(z, x, mu, alpha, xtr, beta)
    ## do loop
    k = 0
    while gaps.sum(0).max() > epsilon and k < K:
        _, i = gaps.max(dim=0)  # gap sampling
        b = z - mu[i] / beta - xtr[i]
        nb = b.norm(p=2, dim=(-1,-2), keepdim=True)
        mu_i = -b * torch.min(alpha[i] / nb, beta)
        z += (mu_i - mu[i]) / beta
        mu[i] = mu_i
        # update all the gaps
        gaps = compute_dual_gaps(z, x, mu, alpha, xtr, beta)
        k += 1
    return z
\end{lstlisting}

\subsubsection{Additional details}
We run the code on a NVIDIA V100 GPU with 32Gb of graphic memory.

One crucial element of our implementation is the design of the bandwidth $\sigma_k$ of the Gaussian kernel $k$.
We compute the DCT features for the training patches $\xp^{(i)}$ ($i=1,\dots,m$), resulting in a $m \times d$ matrix (for $8 \times 8$ patches, $d=64$).
We then compute the standard deviations $\sigma^1, \dots, \sigma^d$, one for each row of the feature matrix.
Finally, we choose the bandwidth as $\sigma_k = s \| (\sigma^1, \dots, \sigma^d)\|_2$ with $s$ a scalar set to 0.2 in our implementation.

In practice, recomputing the dual gaps at each SDCA iterations is time-consuming.
Instead, we update the dual gaps every 25 SDCA steps to reduce the computation time of the update of $z$.
We have not noticed a drop of performance by doing so, compared to recomputing the dual gap at each iteration.

For deblurring, we run $T=8$ iterations of HQS in 6 minutes for a $321 \times 481$ image of BSD68 \cite{martin01database} with $m=10000$ training samples. It could be greatly reduced by selecting a smaller set of training samples, i.e., selecting a smaller $m$, and is one of our lines of research.
As a comparison, EPLL runs $T=6$ steps of HQS in about 3 minutes on our workstation for the same images.

\subsection{Further Experiments} \label{app:furtherexperiments}

In this section, we present an ablation study we have run to select images to collect training patches from and our current results and observations for denoising.

\subsubsection{Choice of datasets.}
We first evaluate the impact of the diversity of the images used to build our training set of patches.
We select 5 images depicting penguins, tigers, bears and buildings from the training set of BSD300 \cite{martin01database}, select one from each class as test image and use the four others as training images to sample $m$ training pairs $(\xp^{(i)}, \yp^{(i)})$.
The images are blurred with kernel 2 from \cite{krishnan09hyperlaplacian} and have $1\%$ additive Gaussian noise.
In \cref{tab:dataset_comparison}, the rows correspond to the test images to be deblurred an the columns are the compositions of the datasets.
We use the same experimental setting than in the main paper for deblurring.
``Penguin'', ``Tiger'', ``Bear'' and ``Building'' correspond to sets of 4 images of the same class and ``Hybrid'' is a set made of one image of each class.

\begin{table}[h]
\caption{Deblurring performance depending of the composition of the training set.}
    \centering
    \begin{tabular}{cccccc} \toprule
        Tr. classes & Penguin & Tiger & Bear & Building & Hybrid \\ \midrule
        Penguin & 29.64 & 29.51 & 29.31 & 28.68 & 29.69\\
        Tiger & 26.91 & 26.90 & 26.81 & 26.74 & 26.89\\
        Bear & 30.92 & 30.81 & 30.61 & 29.59 & 30.94\\
        Building & 24.44 & 24.49 & 24.46 & 24.81  & 24.49\\  \midrule
        Average & \textbf{27.98} & 27.93 & 27.80 & 27.46 & \textbf{28.00} \\
        \bottomrule
    \end{tabular}
    
    \label{tab:dataset_comparison}
\end{table}

In \cref{tab:dataset_comparison}, we show that when restoring an image with patches taken from images of the class, deblurring results are better. 
If now one restores a picture of an animal with images of another animal, performance are in the same ballpark, validating the fact that useful patches can be found in other images than the ones of the same class.
Now if one takes images of buildings to restore images of animals, the performance drops on the images of the penguin and the bear by -1dB.
A similar effect exists if one wants to restore an image of a building with images of animals where a drop of -0.4dB can be observed on the PSNR score the test image depicting a building.

This can be understood by the fact that images of buildings contain regular structures such as lines whereas images of animals feature landscapes and highly-textured surfaces such as furs and grass.
Restraining the training patches to the sole building structures is less effective to restore these details.

Considering three images of animals and one image of a building leads to marginally better results in average than using specific classes for training.
One can see that using only building images leads to a drop of -0.5dB in average compared to the other configurations, showing that variety of the training patches can have an important impact on the restoration performance.

\begin{figure}[t]
    \centering
    \adjustbox{max width=0.99\textwidth}{
    \begin{tabular}{cccc}
        \begin{subfigure}{0.25\textwidth}\includegraphics[scale=0.28]{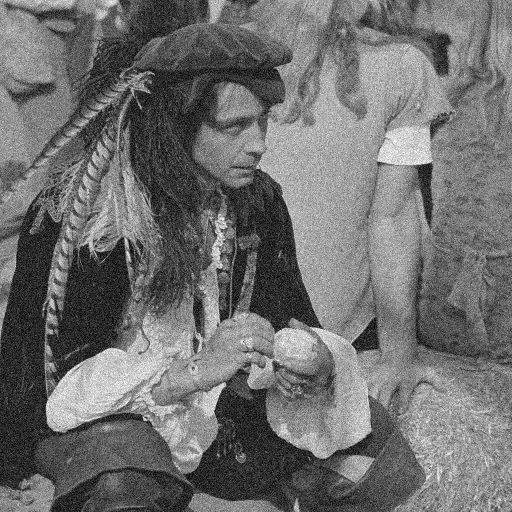}\caption*{Blurry image (21.51dB).}\end{subfigure} &  
        \begin{subfigure}{0.25\textwidth}\includegraphics[scale=0.20]{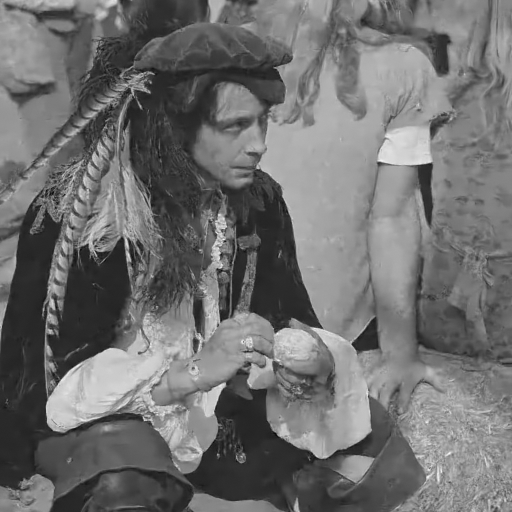}\caption*{EPLL (31.93dB).}\end{subfigure} & 
        \begin{subfigure}{0.25\textwidth}\includegraphics[scale=0.28]{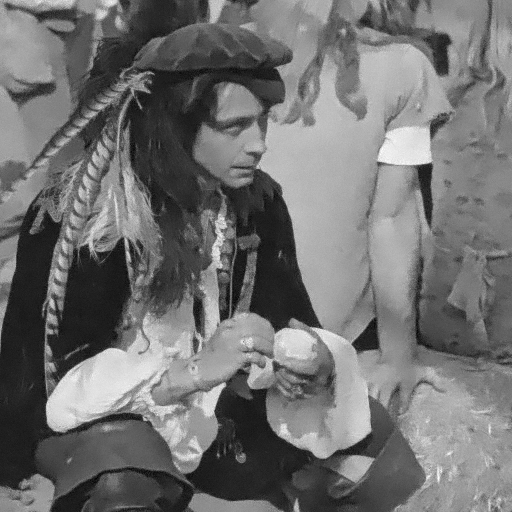}\caption*{Ours ($\ell_2$, 30.11dB).}\end{subfigure} &
        \begin{subfigure}{0.25\textwidth}\includegraphics[scale=0.28]{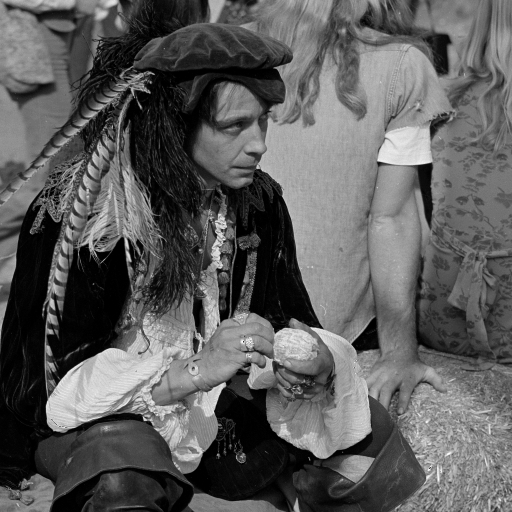}\caption*{Ground-truth image.}\end{subfigure} \\
    \end{tabular}
    }
    \caption{A denoising example from Set12 \cite{zhang17beyond}, with $\sigma=15$, better seen on a computer screen.
    Even though our methods removes most of the noise and produces a decent result, it oversmooths textures and keep residual noise at the salient edges of the image such as the hands of the man whereas EPLL filters out the noise and conserves the high-frequency details, explaining the gap of nearly 2dB with our restored image.}
    \label{fig:noise_man}
\end{figure}

\subsubsection{Denoising} We use the classical dataset of 12 images used in \cite{zhang17beyond} for denoising evaluation.
We add Gaussian noise of standard deviation 15, 25 and 50 (for pixel values in [0,255]). 
We show comparisons with a sparse coding technique \cite{elad06image}, a non-local means approach \cite{dabov07image} and a GMM-based prior \cite{zoran11learning} in \cref{tab:noise}.
We set $T=5$, $\delta=2$, $\beta_0=0.015$ and $\gamma=64,32$ and 16 for $\sigma=15,25$ and 50.

\begin{table}[h]
\caption{Average PSNR for denoising.}
    \centering
    \begin{tabular}{cccccc} \toprule
        Methods & KSVD \cite{elad06image} & BM3D \cite{dabov07image} & EPLL \cite{zoran11learning} & Ours~($\ell_2$) \\ \midrule
        $\sigma=15$ & 31.98 & \textbf{32.36} & 32.10 & 29.39 \\
        $\sigma=25$ & 29.33 & \textbf{29.91} & 29.62 & 27.31 \\
        $\sigma=50$ & 24.82 & 26.20 & \textbf{26.32} & 24.44 \\ \bottomrule
    \end{tabular}
    
    \label{tab:noise}
\end{table}

We are clearly below standard variational methods for the task of denoising by margins of about -2dB to -3db for each considered noise level in terms of PSNR.
This might be explained by the fact that denoising is a less structured problem than deblurring or upsampling, i.e., no information can be exploited from a forward operator $B$ that is in this case the identity.
Thus, one must carefully design the coefficients $\alpha$ to filter out some noise during reconstruction.
The DCT features appear to not be robust enough to noise and we will explore different features to circumvent this issue and improve the denoising performance of the proposed framework.
A denoising example is given in \cref{fig:noise_man} where one can observe that most of the noise is removed and artifacts appear at the edges of the elements of the image such as the fingers of the man.

\section{Additional Images} \label{app:additionalimages}

In this section, we present additional images for non-blind deblurring and upsampling. 
The images are better seen on a computer screen.
The metric is the PSNR.

\subsection{Non-blind deblurring}

In this section we present more non-blind deblurring examples comparing EPLL and the $\ell_2$ version of the proposed framework on the test set of BSD \cite{martin01database} with the same experimental setting than presented in the main paper. 
The results are shown in Figures \ref{fig:nbd_islander}, \ref{fig:nbd_castle}, \ref{fig:nbd_ruins}, \ref{fig:nbd_fish} and \ref{fig:nbd_tiger}.
This demonstrates the practical ability of structured prediction to handle this task as well as a standard variational framework, i.e., EPLL \cite{zoran11learning}, with a prior learned over 200000 patches.
\begin{figure}[!ht]
    \centering
    \adjustbox{max width=0.99\textwidth}{
    \begin{tabular}{cccc}
        \begin{subfigure}{0.25\textwidth}\includegraphics[scale=0.47]{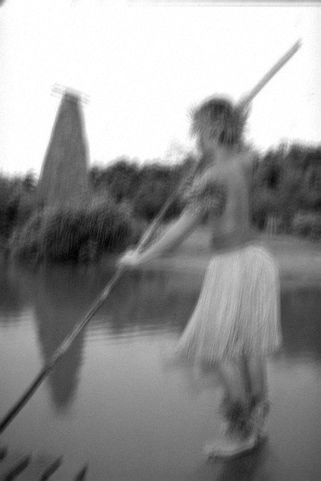}\caption*{Blurry image (24.28dB).}\end{subfigure} &  
        \begin{subfigure}{0.25\textwidth}\includegraphics[scale=0.34]{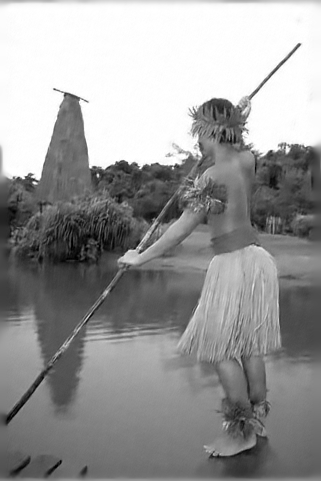}\caption*{EPLL (28.10dB).}\end{subfigure} & 
        \begin{subfigure}{0.25\textwidth}\includegraphics[scale=0.47]{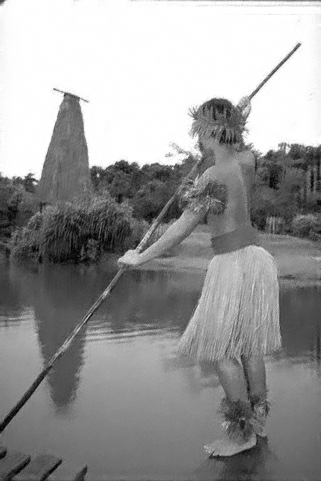}\caption*{Ours ($\ell_2$, 29.60dB).}\end{subfigure} &
        \begin{subfigure}{0.25\textwidth}\includegraphics[scale=0.47]{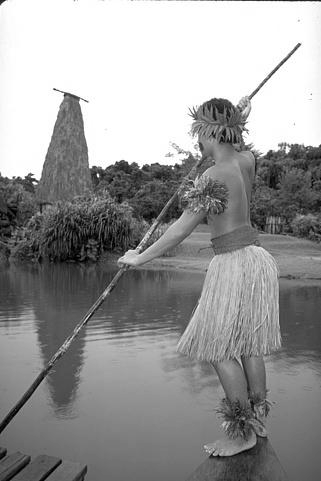}\caption*{Ground-truth image.}\end{subfigure} \\
    \end{tabular}
    }
    \caption{A non-blind deblurring example from BSD68 \cite{martin01database}.}
    \label{fig:nbd_islander}
\end{figure}
\begin{figure}[!ht]
    \centering
    \adjustbox{max width=0.99\textwidth}{
    \begin{tabular}{cccc}
        \begin{subfigure}{0.25\textwidth}\includegraphics[scale=0.47]{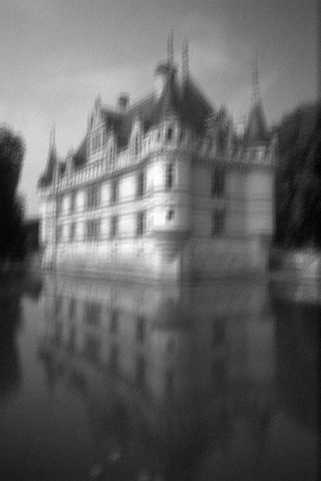}\caption*{Blurry image (24.37dB).}\end{subfigure} &  
        \begin{subfigure}{0.25\textwidth}\includegraphics[scale=0.34]{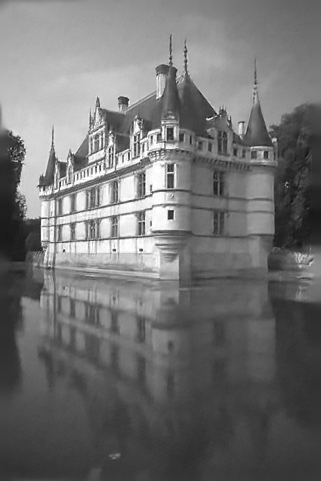}\caption*{EPLL (29.84dB).}\end{subfigure} & 
        \begin{subfigure}{0.25\textwidth}\includegraphics[scale=0.47]{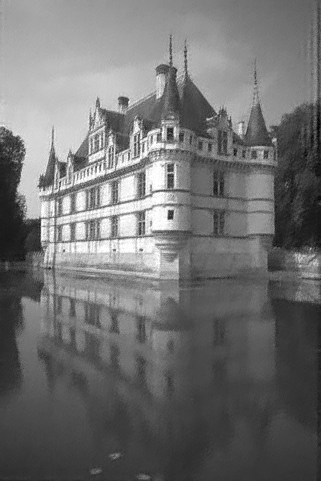}\caption*{Ours ($\ell_2$, 28.89dB).}\end{subfigure} &
        \begin{subfigure}{0.25\textwidth}\includegraphics[scale=0.47]{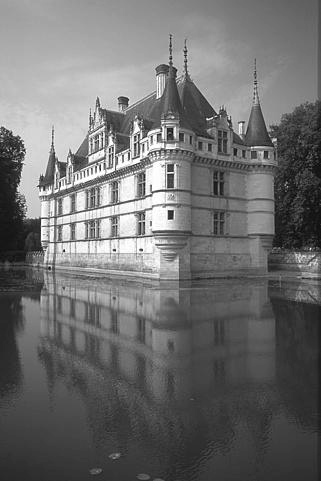}\caption*{Ground-truth image.}\end{subfigure} \\
    \end{tabular}
    }
    \caption{A non-blind deblurring example from BSD68 \cite{martin01database}.}
    \label{fig:nbd_castle}
\end{figure}
\begin{figure}[!ht]
    \centering
    \adjustbox{max width=0.99\textwidth}{
    \begin{tabular}{cccc}
        \begin{subfigure}{0.25\textwidth}\includegraphics[scale=0.47]{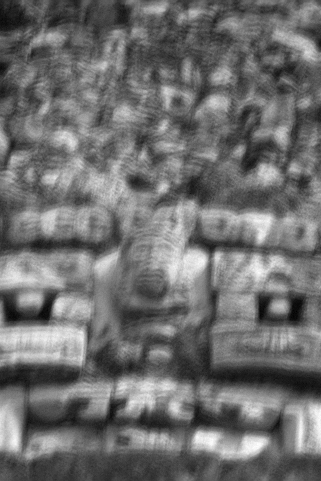}\caption*{Blurry image (21.51dB).}\end{subfigure} &  
        \begin{subfigure}{0.25\textwidth}\includegraphics[scale=0.34]{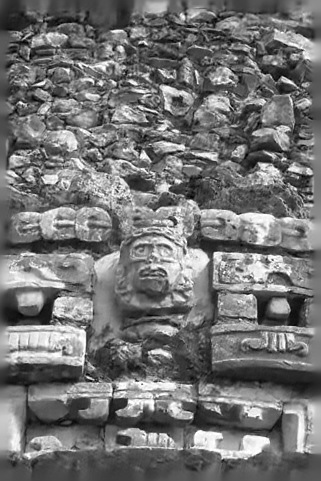}\caption*{EPLL (22.76dB).}\end{subfigure} & 
        \begin{subfigure}{0.25\textwidth}\includegraphics[scale=0.47]{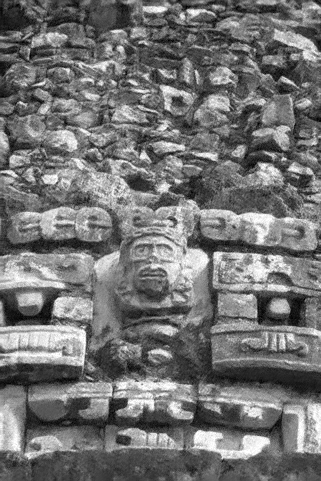}\caption*{Ours ($\ell_2$, 23.65dB).}\end{subfigure} &
        \begin{subfigure}{0.25\textwidth}\includegraphics[scale=0.47]{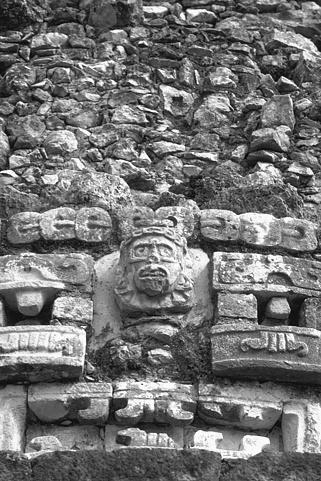}\caption*{Ground-truth image.}\end{subfigure} \\
    \end{tabular}
    }
    \caption{A non-blind deblurring example from BSD68 \cite{martin01database}.}
    \label{fig:nbd_ruins}
\end{figure}
\begin{figure}[!ht]
    \centering
    \adjustbox{max width=0.99\textwidth}{
    \begin{tabular}{cccc}
        \begin{subfigure}{0.25\textwidth}\includegraphics[scale=0.47]{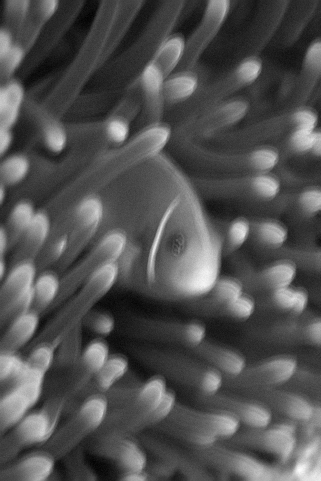}\caption*{Blurry image (24.07dB).}\end{subfigure} &  
        \begin{subfigure}{0.25\textwidth}\includegraphics[scale=0.34]{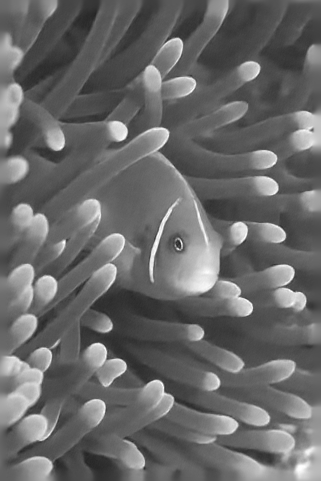}\caption*{EPLL (29.49dB).}\end{subfigure} & 
        \begin{subfigure}{0.25\textwidth}\includegraphics[scale=0.47]{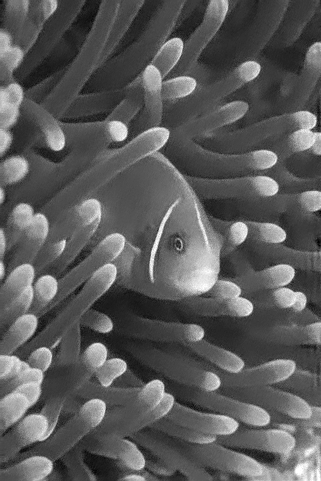}\caption*{Ours ($\ell_2$, 31.71dB).}\end{subfigure} &
        \begin{subfigure}{0.25\textwidth}\includegraphics[scale=0.47]{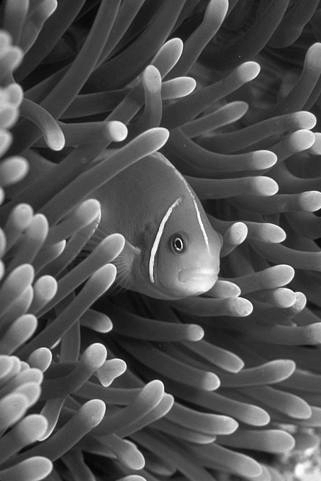}\caption*{Ground-truth image.}\end{subfigure} \\
    \end{tabular}
    }
    \caption{A non-blind deblurring example from BSD68 \cite{martin01database}.}
    \label{fig:nbd_fish}
\end{figure}
\begin{figure}[!ht]
    \centering
    \adjustbox{max width=0.99\textwidth}{
    \begin{tabular}{cc}
        \begin{subfigure}{0.50\textwidth}\includegraphics[scale=0.57]{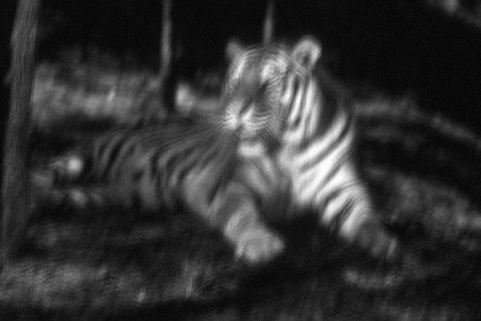}\caption*{Blurry image (24.60dB).}\end{subfigure} &  
        \begin{subfigure}{0.50\textwidth}\includegraphics[scale=0.412]{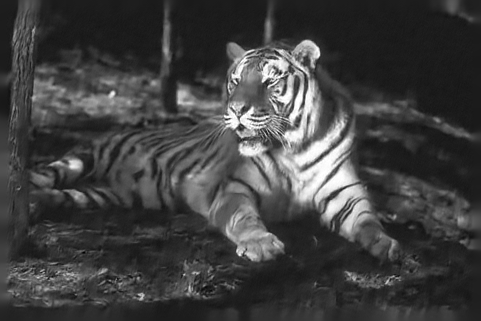}\caption*{EPLL (29.20dB).}\end{subfigure} \\
        \begin{subfigure}{0.50\textwidth}\includegraphics[scale=0.57]{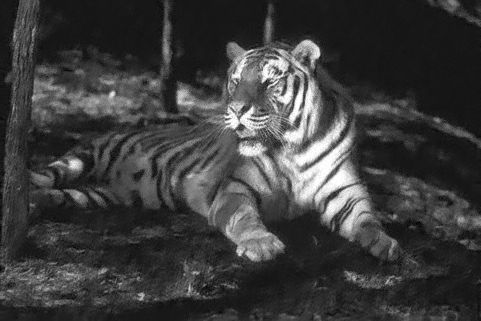}\caption*{Ours ($\ell_2$, 28.77dB).}\end{subfigure} &
        \begin{subfigure}{0.50\textwidth}\includegraphics[scale=0.57]{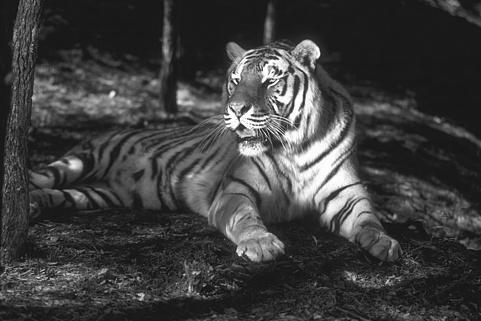}\caption*{Ground-truth image.}\end{subfigure} \\
    \end{tabular}
    }
    \caption{A non-blind deblurring example from BSD68 \cite{martin01database}.}
    \label{fig:nbd_tiger}
\end{figure}

\subsection{Upsampling}

In this section, we present qualitative results for upsampling by a factor $\times 2$ by comparing the $\ell_2$ version of our method with an image obtained by bicubic interpolation, KSVD \cite{zeyde10scaleup}, GR and ANR, two variants of \cite{timofte13anchored}. 
The images are taken from the Set5 dataset \cite{timofte13anchored} and we followed the same experimental protocol than in the paper.
The results are shown in Figures \ref{fig:sisr_bird}, \ref{fig:sisr_butterfly} and \ref{fig:sisr_woman}.
For each image we do better than bicubic interpolation but fail behind ANR and KSVD.
The differences explaining are results below other standard dictionary learning methdods are aliasing artifacts on the beak of the bird in \cref{fig:sisr_bird}, the black edges on the wings of the butterfly \cref{fig:sisr_butterfly} and the fingeres and sleeves of the woman \cref{fig:sisr_woman}.
These details are better seen on a computer screen.

\begin{figure}[!ht]
    \centering
    \adjustbox{max width=0.99\textwidth}{
    \begin{tabular}{ccc}
        \begin{subfigure}{0.33\textwidth}\includegraphics[scale=0.45]{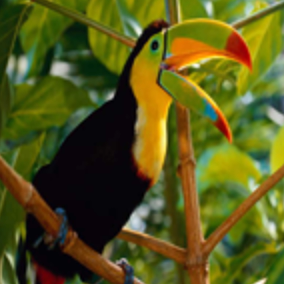}\caption*{Bicubic (36.8dB).}\end{subfigure} &  
        \begin{subfigure}{0.33\textwidth}\includegraphics[scale=0.45]{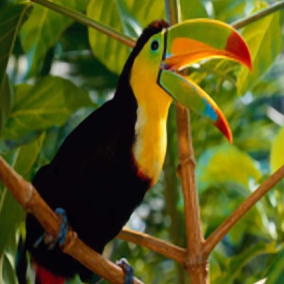}\caption*{KSVD (39.9dB).}\end{subfigure} & 
        \begin{subfigure}{0.33\textwidth}\includegraphics[scale=0.45]{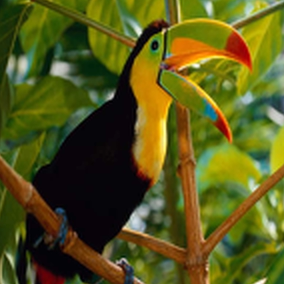}\caption*{GR (39.0dB).}\end{subfigure} \\ 
        \begin{subfigure}{0.33\textwidth}\includegraphics[scale=0.45]{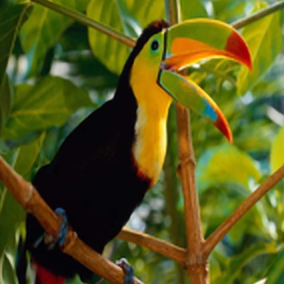}\caption*{ANR (40.0dB).}\end{subfigure} &
        \begin{subfigure}{0.33\textwidth}\includegraphics[scale=0.61]{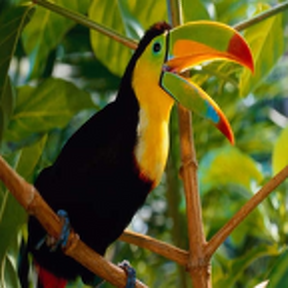}\caption*{Ours ($\ell_2$, 38.9dB).}\end{subfigure} &  
        \begin{subfigure}{0.33\textwidth}\includegraphics[scale=0.45]{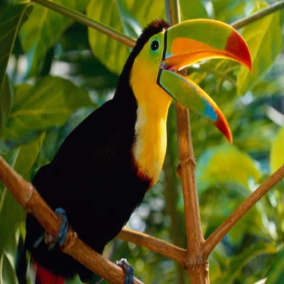}\caption*{Ground-truth image.}\end{subfigure} \\
    \end{tabular}
    }
    \caption{An upsampling example from Set5 \cite{timofte13anchored}.}
    \label{fig:sisr_bird}
\end{figure}

\begin{figure}[!ht]
    \centering
    \adjustbox{max width=0.99\textwidth}{
    \begin{tabular}{ccc}
        \begin{subfigure}{0.33\textwidth}\includegraphics[scale=0.52]{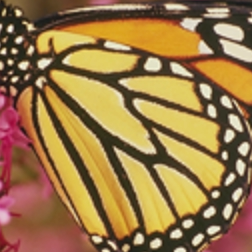}\caption*{Bicubic (27.4dB).}\end{subfigure} &  
        \begin{subfigure}{0.33\textwidth}\includegraphics[scale=0.52]{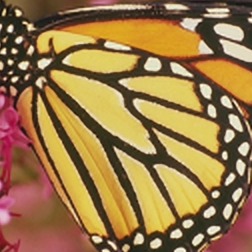}\caption*{KSVD (30.6dB).}\end{subfigure} & 
        \begin{subfigure}{0.33\textwidth}\includegraphics[scale=0.52]{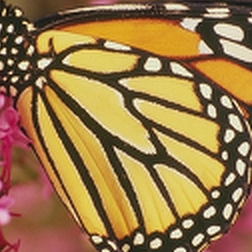}\caption*{GR (29.1dB).}\end{subfigure} \\ 
        \begin{subfigure}{0.33\textwidth}\includegraphics[scale=0.52]{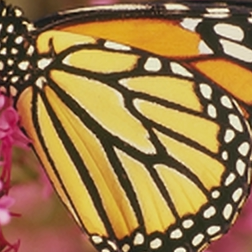}\caption*{ANR (30.5dB).}\end{subfigure} &
        \begin{subfigure}{0.33\textwidth}\includegraphics[scale=0.71]{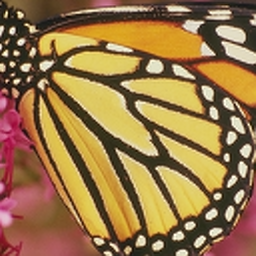}\caption*{Ours ($\ell_2$, 30.0dB).}\end{subfigure} &  
        \begin{subfigure}{0.33\textwidth}\includegraphics[scale=0.52]{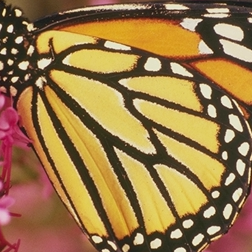}\caption*{Ground-truth image.}\end{subfigure} \\
    \end{tabular}
    }
    \caption{An upsampling example from Set5 \cite{timofte13anchored}.}
    \label{fig:sisr_butterfly}
\end{figure}

\begin{figure}[!ht]
    \centering
    \adjustbox{max width=0.99\textwidth}{
    \begin{tabular}{ccc}
        \begin{subfigure}{0.33\textwidth}\includegraphics[scale=0.54]{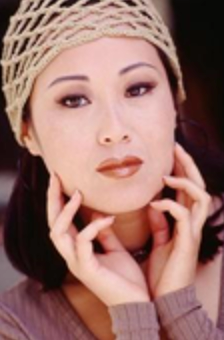}\caption*{Bicubic (32.1dB).}\end{subfigure} &  
        \begin{subfigure}{0.33\textwidth}\includegraphics[scale=0.54]{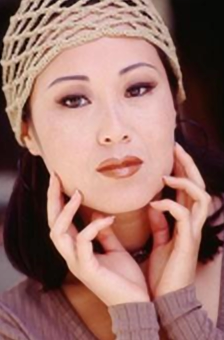}\caption*{KSVD (34.5dB).}\end{subfigure} & 
        \begin{subfigure}{0.33\textwidth}\includegraphics[scale=0.54]{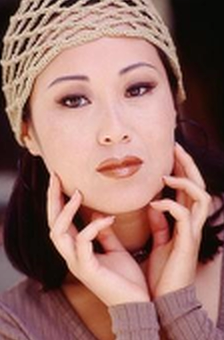}\caption*{GR (33.7dB).}\end{subfigure} \\ 
        \begin{subfigure}{0.33\textwidth}\includegraphics[scale=0.54]{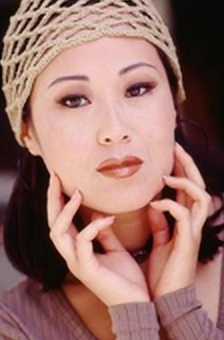}\caption*{ANR (34.5dB).}\end{subfigure} &
        \begin{subfigure}{0.33\textwidth}\includegraphics[scale=0.748]{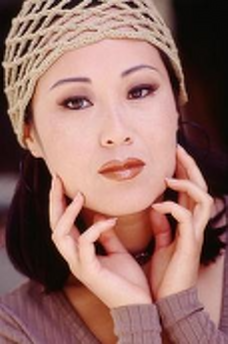}\caption*{Ours ($\ell_2$, 34.0dB).}\end{subfigure} &  
        \begin{subfigure}{0.33\textwidth}\includegraphics[scale=0.54]{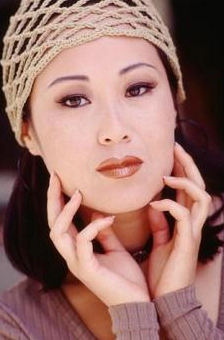}\caption*{Ground-truth image.}\end{subfigure} \\
    \end{tabular}
    }
    \caption{An upsampling example from Set5 \cite{timofte13anchored}.}
    \label{fig:sisr_woman}
\end{figure}
\end{document}